\newtheorem{theorem}{Theorem}
\newtheorem{lemma}{Lemma}
\newenvironment{proofof}[1]{\par\noindent{\bfseries\upshape
  Proof #1\ }}{\hfill\qed\\[2mm]}
\newcommand{\E}{\mathbb{E}}
\newcommand{\R}{\Re}
\renewcommand{\P}{\mathcal{P}}
 \newcommand{\eat}[1]{}
\newcommand{\cA}{\mathrm{cA}}
\newcommand{\dA}{\mathrm{dA}}
\newcommand{\cB}{\mathrm{cB}}
\newcommand{\dB}{\mathrm{dB}}
\newcommand{\cD}{\mathrm{cD}}
\newcommand{\dD}{\mathrm{dD}}
\newcommand{\cC}{\mathrm{cC}}
\newcommand{\dC}{\mathrm{dC}}
\newcommand{\A}{\mathcal{A}}
\newcommand{\argmin}{\mathrm{argmin}}
\newcommand{\hL}{\mathcal{L}}
\newcommand{\sign}{\mathrm{sign}}
\newcommand{\error}{\mathrm{err}}
\newcommand{\erf}{\mathrm{erf}}
\newcommand{\cb}{2.3463}
\newcommand{\ao}{0.0121608} 
\newcommand{\Lw}{0.665769} 
\newcommand{\gen}{10^{-8}} 
\renewcommand{\vec}[0]{}
\newif\ifcomment
\newcommand{\opt}{\mathrm{OPT}}
\newcommand{\mnc}{\mathrm{\eta}}
\title{Efficient Learning of Linear Separators under Bounded Noise}
\author{Pranjal Awasthi \\ \small{pawashti@cs.princeton.edu} \and
Maria-Florina Balcan  \\ \small{ninamf@cs.cmu.edu} \and
Nika Haghtalab \\ \small{nhaghtal@cs.cmu.edu}\and
Ruth Urner  \\ \small{rurner@tuebingen.mpg.de}
}
\begin{document}

\maketitle

\begin{abstract}
We study the learnability of linear separators in $\Re^d$ in the presence of bounded  (a.k.a Massart) noise.
 This is a  realistic generalization of the random classification noise model, where the adversary can flip each example $x$ with probability $\eta(x) \leq \eta$. We provide the first polynomial time algorithm that can learn linear separators to arbitrarily small excess error in this noise model under the uniform distribution over the unit ball in $\Re^d$, for some constant value of $\eta$. While widely studied in the statistical learning theory community in the context of getting faster convergence rates, computationally efficient algorithms in this model had remained elusive. Our work provides the first evidence that one can indeed design  algorithms achieving arbitrarily small excess error in  polynomial time  under this realistic noise model and thus opens up a new and exciting line of research.

We additionally provide lower bounds showing that popular algorithms such as hinge loss minimization and averaging cannot lead to arbitrarily small excess error under Massart noise, even under the uniform distribution. Our work instead, makes use of a margin based technique developed in the context of active learning. As a result, our algorithm is also an active learning algorithm with label complexity that is only a logarithmic the desired excess error $\epsilon$. 
\end{abstract}

\section{Introduction}

\noindent{\bf Overview}~~
Linear separators are the most popular classifiers studied in both the theory and practice of machine learning.
Designing noise tolerant, polynomial time learning algorithms that achieve arbitrarily small excess error rates for linear separators is a long-standing question in learning theory. In the absence of noise (when the data is realizable) such algorithms exist via linear programming~\citep{Cristianini00}. However, the problem becomes significantly harder in the presence of label noise. In particular, in this work we are concerned with designing algorithms that can achieve error $\opt +\epsilon$ which is arbitrarily close to $\opt$, the error  of the best linear separator, and run in time polynomial in $\frac 1 \epsilon$ and $d$ (as usual, we call $\epsilon$  the {\em excess error}). Such strong guarantees are only known for the well studied random classification noise model~\citep{blum1998polynomial}. In this work, we provide the first algorithm that can achieve arbitrarily small excess error, in truly polynomial time, for bounded noise, also called Massart noise~\citep{massart2006}, a much more  realistic and
widely studied noise model in statistical learning theory~\citep{bbl05}.
We additionally show strong lower bounds under the same noise model for two other computationally efficient learning algorithms (hinge loss minimization and the averaging algorithm), which could be of independent interest.

\smallskip

\noindent{\bf Motivation}~~
The work on computationally efficient algorithms for learning halfspaces has focused on two different extremes. On one hand, for the very stylized random classification noise model (RCN),  where each example $\vec x$ is flipped independently with equal probability $\mnc$, several works have provided computationally efficient algorithms that can achieve arbitrarily small excess error in polynomial time ~\citep{blum1998polynomial,servedio2001efficient,vitalynina13} --- note that all these results crucially exploit the high amount of symmetry present in the RCN noise.
At the other extreme, there has been significant work on much more difficult and adversarial noise models, including the agnostic model~\citep{KearnsSS94}
 and malicious noise models~\citep{Kearns-li:93}.
The best results here however, not only require additional distributional assumptions about the marginal over the instance space,
 but they only achieve much weaker multiplicative approximation guarantees~\citep{kalai2008agnostic, KLS09,awasthi2014power}; for example, the best result of this form for the case of
uniform distribution over the unit sphere $S_{d-1}$
achieves excess error  $c \opt$~\citep{awasthi2014power}, for some large constant $c$.  While interesting from a technical point of view, guarantees of this form are somewhat troubling
 from a statistical point of view, as they are inconsistent, in the sense there is a barrier  $O(\opt)$, after which we cannot prove that the excess error
further decreases as we get more and more samples.
In fact, recent evidence shows that this is unavoidable for polynomial time algorithms for such adversarial noise models~\citep{amit2014stoc}.

\smallskip

\noindent{\bf Our Results}~~
In this work we identify  a realistic and widely studied noise model in the statistical learning theory, the so called Massart noise~\citep{bbl05},
 for which we can prove much stronger guarantees.
Massart noise can be thought of as a generalization of the random classification noise model where the label of each example $x$ is flipped independently
with probability $\mnc(x) < 1/2$.
The adversary has control over choosing a different noise rate $\eta(x) \leq \mnc$ for every example $x$ with the only constraint that $\eta(x) \leq \mnc$.
From a statistical point of view, it is well known that under this model, we can get faster rates compared to worst case joint distributions~\citep{bbl05}.
In computational learning theory, this noise model was also studied, but under the name of malicious misclassification
noise \citep{rivest1994formal,sloan1996pac}. However due to its highly unsymmetric nature, til date, computationally efficient learning algorithms in this model have remained elusive.
In this work, we provide the first computationally efficient algorithm achieving arbitrarily small excess error for learning linear separators.

Formally, we show that there exists a polynomial time algorithm that can  learn linear separators to  error $\opt + \epsilon$ and run in $\textrm{poly}(d, \frac{1}{\epsilon})$
when the underlying distribution is the uniform distribution over the unit ball in $\Re^d$ and the noise of each example is upper bounded by a constant $\mnc$ (independent of the dimension).

As mentioned earlier, a result of this form was only known for random classification noise.
 From a technical point of view,
as opposed to random classification noise, where the error of each classifier scales uniformly under the observed labels,
 the observed error of classifiers under Masasart noise could change drastically in a non-monotonic fashion. This is due to the fact that the adversary has control over choosing a different noise rate $\eta(x) \leq \mnc$ for every example $x$. As a result, as we show in our work (see Section~\ref{sec:average}), standard algorithms such as the averaging algorithm~\citep{servedio2001efficient} which work for random noise can only achieve a much poorer excess error~(as a function of $\mnc$) under Massart noise. Technically speaking, this is due to the fact that Massart noise can introduce high correlations between the observed labels and the component orthogonal to the direction of the best classifier.

In face of these challenges, we take an entirely different approach than previously considered for random classification noise.
Specifically, we analyze a recent margin based algorithm of~\cite{awasthi2014power}.
This algorithm was designed for learning linear separators under agnostic and malicious noise models,
and it was shown to achieve an excess error of  $c\opt$ for a constant $c$.
 By using  new structural insights, we show that  there exists a {\em constant} $\mnc$ (independent of the dimension), so that if we use
 Massart noise where the flipping probability is upper bounded by $\mnc$, we can use a modification of the algorithm in~\cite{awasthi2014power} and achieve arbitrarily small excess error.
   One way to think about this result is that we define an adaptively chosen sequence of hinge loss minimization problems around smaller and smaller bands
 around the current guess for the target. We show by relating the hinge loss and 0/1-loss together with a careful localization analysis that
 these will direct us closer and closer to the optimal classifier, allowing us to achieve arbitrarily small excess error rates in polynomial time.

Given that our algorithm is an adaptively chosen sequence of hinge loss minimization problems, one might wonder what guarantee one-shot hinge loss minimization could provide.
In Section~\ref{hinge},
we show a strong negative result: for every $\tau$, and $\mnc \leq 1/2$, there is a noisy distribution $\tilde D$ over $\Re^d \times \{0,1\}$ satisfying Massart noise with parameter $\mnc$ and an $\epsilon>0$, such that $\tau$-hinge loss minimization returns a classifier with excess error $\Omega(\epsilon)$.
This  result could be of independent interest. While there exists  earlier work showing that hinge loss minimization can lead to classifiers of large $0/1$-loss \citep{Ben-DavidLSS12},
the lower bounds in that paper  employ distributions with \emph{significant mass} on \emph{discrete points} with \emph{flipped label} (which is not possible under Massart noise) at a very \emph{large distance} from the optimal classifier. Thus, that result makes strong use of the hinge loss's sensitivity to errors at large distance.
Here, we show that hinge loss minimization is bound to fail under much more benign conditions.

One appealing  feature of our result is the algorithm we analyze is in fact naturally adaptable to the  active learning or selective sampling scenario
(intensively studied in recent years~\citep{Hanneke07,sanjoy-coarse,hanneke-survey}, where
the learning algorithms only receive
the classifications of examples when they ask for them. We show
that, in this model, our algorithms achieve a label complexity whose
dependence on the error parameter $\epsilon$ is polylogarithmic
(and thus exponentially better than that of any passive algorithm).
This provides the first polynomial-time
active learning algorithm for learning linear
separators under Massart noise.
We note that prior to our work only inefficient algorithms could achieve the desired label complexity
 under Massart noise~\citep{balcan2007margin,hanneke-survey}.

\smallskip

\noindent{\bf Related Work}
The agnostic noise model is notoriously hard to deal with computationally and there is significant evidence that achieving arbitrarily small excess
 error in polynomial time is hard in this model~\citep{ABSS97, GR06, amit2014stoc}.
For this model, under our distributional assumptions, ~\citep{kalai2008agnostic} provides an algorithm that learns linear separators in $\Re^d$ to excess error at most $\epsilon$,  but whose running time $poly(d^{\exp(1/\epsilon)})$.
Recent work show evidence that the exponential dependence on
  $1/\epsilon$ is unavoidable in this case~\citep{klivans2014embedding} for the agnostic case. We side-step this by considering a more structured, yet realistic noise model.

Motivated by the fact that many modern machine learning applications
have massive amounts of unannotated or unlabeled data, there has been significant interest in designing active learning algorithms that most efficiently utilize the available data,
 while minimizing the need for human intervention.
Over the past decade there has been substantial  progress on understanding the underlying statistical principles of active learning, and several general characterizations have been developed for
describing when active learning could have an advantage over the classical passive supervised learning paradigm both in the noise free settings and in the agnostic case
~\citep{QBC,sanjoy-coarse,Balcan06,balcan2007margin,Hanneke07,dhsm,CN07,sanjoy11-encyc,hanneke-survey}.
However, despite many efforts, except for very simple noise models (random classification noise~\citep{vitalynina13} and linear noise~\citep{dgs12}),
to date there are no known computationally efficient algorithms with provable guarantees in the presence of Massart noise that can achieve arbitrarily small excess error.

We note that work of~\cite{HY14}  provides computationally efficient algorithms for both passive and active learning under the
assumption that the hinge loss (or other surrogate loss) minimizer aligns with the minimizer of
the 0/1-loss. In our work (Section~\ref{hinge}),  we show that this is not the case under Massart noise even when the marginal over the instance space is uniform, but still provide a computationally efficient algorithm for this much more challenging setting.

\section{Preliminaries}
We consider the binary classification problem; that is, we work on the problem of predicting a binary label $y$  for a given instance $\vec x$.
We assume that the data points $(\vec x, y)$ are drawn from an unknown underlying distribution $\tilde D$ over $X\times Y$, where  $X= \R^d$ is the instance space and  $Y =\{-1, 1\}$ is the label space. For the purpose of this work, we consider distributions where  the marginal of $\tilde D$ over $X$ is a uniform distribution on a $d$-dimensional unit ball.
We work with the class of all homogeneous halfspaces, denoted by $\mathcal{H}= \{ \sign(\vec w \cdot \vec x):~ \vec w\in \R^d\}$. For a given halfspace $\vec w\in \mathcal{H}$, we define the error of $\vec w$ with respect to $\tilde D$, by $\error_{\tilde D}(\vec w) = \Pr_{(\vec x, y)\sim \tilde D}[\sign(\vec w\cdot \vec x)\neq y]$.

We examine learning halfspaces in the presence of Massart noise. In this setting, we assume that the Bayes optimal classifier is a linear separator $\vec w^*$. Note that $\vec w^*$ can have a non-zero error. Then Massart noise with parameter $\beta>0$ is a condition such that for all $\vec x$, the conditional label probability is such that
\begin{equation}\label{eq:active:massart}
 |\Pr(y=1|\vec x) - \Pr(y=-1|\vec x)| \geq \beta.
\end{equation}
Equivalently, we say that $\tilde D$ satisfies Massart noise with parameter $\beta$, if an adversary construct $\tilde D$ by first taking  the distribution $D$ over  instances $(\vec x, \sign(\vec w^*\cdot \vec x))$ and then flipping the label of an instance $\vec x$ with probability \emph{at most} $\frac{1-\beta}{2}$. \footnote{Note that the relationship between Massart noise parameter $\beta$, and the maximum flipping probability discussed in the introduction $\mnc$, is  $\mnc = \frac{1-\beta}{2}$.}
Also note that under distribution $\tilde D$,  $\vec w^*$ remains the Bayes optimal classier. 
In the remainder of this work, we refer to $\tilde D$ as the  ``noisy'' distribution and to distribution $D$ over instances $(\vec x, \sign(\vec w^*\cdot \vec x))$ as the ``clean'' distribution.

Our goal is then to find a halfspace $\vec w$ that has small excess error, as compared to the Bayes optimal classifier $\vec w^*$. That is, for any $\epsilon >0$, find a halfspace $\vec w$, such that $\error_{\tilde D}(\vec w) - \error_{\tilde D}(\vec w^*) \leq \epsilon$.  
Note that the excess error of any classifier $\vec w$ only depends on the points in the region where $\vec w$ and $\vec w^*$ disagree. So,  $\error_{\tilde D}(\vec w) - \error_{\tilde D}(\vec w^*)\leq \frac{\theta(\vec w,\vec w^*)}{\pi}$. 
Additionally, under Massart noise the amount of noise in the disagreement region is also bounded by $\frac{1-\beta}{2}$. It is not difficult to see that under Massart noise, 
\begin{equation}
 \beta ~\frac{\theta(\vec w, \vec w^*)}{\pi} \leq  \error_{\tilde D}(\vec w) - \error_{\tilde D}(\vec w^*). \label{eq:angle-excess}
\end{equation}

In our analysis, we frequently examine the region  within a certain margin of a halfspace. For a halfspace $\vec w$ and margin $b$, let $S_{\vec w, b}$ be the set of all points that fall within a margin $b$ from $\vec w$, i.e., $S_{\vec w, b} = \{\vec x: ~ |\vec w \cdot \vec x| \leq b\}$.
For distributions $\tilde D$ and $D$, we indicate the distribution conditioned on $S_{\vec w, b}$ by $\tilde D_{\vec w, b}$ and $D_{\vec w, b}$, respectively.
In the remainder of this work, we refer to the region $S_{\vec w, b}$ as ``the band''.

In our analysis, we use hinge loss, as a convex surrogate function for the 0/1-loss. For a halfspace $\vec w$, we use  $\tau$-normalized  hinge loss that is defined as $\ell(\vec w, \vec x, y) = \max \{0, 1- \frac{(\vec w\cdot \vec x)y}{\tau}\}$.
For a labeled sample set $W$, let $\ell(\vec w, W) = \frac{1}{|W|} \sum_{(\vec x,y)\in W} \ell(\vec w, \vec x, y)$ be the empirical hinge loss of a vector $w$ with respect to $W$. 

\section{Computationally Efficient Algorithm for Massart Noise} \label{sec:margin}
In this section, prove our main result for learning half-spaces in presence of Massart noise.
We focus on the case where $D$ is the uniform distribution on the  $d$-dimensional unit ball.
Our main Theorem is as follows.

\begin{theorem}\label{thm:active:hinge-band} \label{thm:main-intro}
Let the optimal bayes classifier be a half-space denoted by $\vec w^*$. Assume that the massart noise condition holds for some $\beta> 1 - 3.6\times 10^{-6}$.
Then for any $\epsilon$, $\delta>0$, Algorithm~\ref{alg:blocks} with $\lambda = \gen$,  $\alpha_k = 0.038709\pi (1-\lambda)^{k-1}$, $b_{k-1} = \frac{\cb \alpha_k}{\sqrt d}$, and  $\tau_k = \sqrt{2.50306} ~ (3.6 \times 10^{-6})^{1/4} b_{k-1}$,
runs in polynomial time,  proceeds in
$s = O(\log \frac{1}{\epsilon})$ rounds, where in round $k$ it takes $n_k = \mathrm{poly}(d, \exp(k), \log(\frac 1\delta))$ unlabeled samples and $m_k = O(d(d+ \log(k/\delta)))$ labels
and with probability $(1-\delta)$ returns a linear separator  that has excess error (compared to $\vec w^*$) of at most $\epsilon$.
\end{theorem}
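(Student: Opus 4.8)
The plan is to run a contraction argument on the angle $\theta_k := \theta(\vec w_k, \vec w^*)$ between the round-$k$ hypothesis and the Bayes optimal separator. I would maintain the invariant that at the start of round $k$ we have $\theta(\vec w_{k-1}, \vec w^*) \leq \alpha_k$, and prove a one-round improvement lemma showing that band-restricted hinge loss minimization produces $\vec w_k$ with $\theta(\vec w_k, \vec w^*) \leq \alpha_{k+1} = (1-\lambda)\alpha_k$. Treating $\lambda$ as a fixed constant, iterating this contraction for $s = O(\frac{1}{\lambda}\log\frac1\epsilon) = O(\log\frac1\epsilon)$ rounds drives the angle below $\pi\epsilon$, at which point the upper bound $\error_{\tilde D}(\vec w) - \error_{\tilde D}(\vec w^*) \leq \theta(\vec w, \vec w^*)/\pi$ accompanying~(\ref{eq:angle-excess}) yields excess error at most $\epsilon$. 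The base case is a cheap initialization (e.g.\ averaging or one unrestricted hinge minimization) guaranteeing $\theta(\vec w_0, \vec w^*) \leq \alpha_1 = 0.038709\pi$.

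The heart of the argument is the one-round lemma, which I would prove in three steps, all carried out for the conditional distribution $\tilde D_{\vec w_{k-1}, b_{k-1}}$ on the band. First, I would \textbf{upper bound the hinge loss of $\vec w^*$}: writing $\E[\ell(\vec w^*, \cdot)]$ over the noisy band distribution as a clean term plus a noise term, the clean term is controlled because, under the uniform marginal, only points within margin $\tau_k$ of $\vec w^*$ contribute, and the fraction of such points in a band of width $b_{k-1} = \cb\alpha_k/\sqrt d$ with $\tau_k \propto b_{k-1}$ is a small constant multiple of $\tau_k/b_{k-1}$; the noise term is at most $\mnc = \frac{1-\beta}{2}$ times the maximum hinge value $1 + b_{k-1}/\tau_k$ on the band, which stays bounded precisely because $\tau_k$ scales with the band width. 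Second, I would \textbf{lower bound the hinge loss of any badly aligned $\vec w$}: if $\theta(\vec w, \vec w^*) > \alpha_{k+1}$, then $\vec w$ and $\vec w^*$ disagree on a region of the band whose mass is proportional to the excess angle, and a localization computation for the uniform distribution turns the positive hinge incurred there into a lower bound growing with $\theta(\vec w, \vec w^*)$. Third, \textbf{uniform convergence}: since homogeneous halfspaces have VC dimension $d$ and the hinge loss on the band is bounded, $m_k = O(d(d + \log(k/\delta)))$ in-band labels suffice (via a VC/Rademacher bound on the band and a union bound over the $s$ rounds with failure probability $\delta/s$) so that the empirical minimizer $\vec w_k$ satisfies $\E[\ell(\vec w_k, \cdot)] \leq \E[\ell(\vec w^*, \cdot)] + o(1)$.

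Chaining the three steps forces the contraction: the minimizer cannot exceed the hinge loss of $\vec w^*$ by more than $o(1)$, yet a large angle would force a strictly larger hinge loss, so $\theta(\vec w_k, \vec w^*) \leq (1-\lambda)\alpha_k$ once the constants $\cb$, $\tau_k/b_{k-1}$, and the noise bound $\mnc$ are balanced. The main obstacle --- and the reason this succeeds for Massart noise where averaging and one-shot hinge minimization fail --- is controlling the adversarial, asymmetric noise contribution in the first step. Because the adversary may place the full flip probability $\mnc$ anywhere in the band, the only available leverage is that every point of $S_{\vec w_{k-1}, b_{k-1}}$ lies within bounded normalized margin $b_{k-1}/\tau_k = O(1)$, so each flipped label raises the hinge loss by only an $O(1)$ amount rather than the unbounded amount a far-away flipped point could (exactly the failure mode exploited in Section~\ref{hinge}). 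Making the noise term small enough that it does not swamp the margin gap driving the contraction is what forces $1-\beta$ below the explicit constant $3.6\times 10^{-6}$.

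Finally I would assemble the resource bounds. Each round solves a single convex hinge-loss program, so the total running time over $O(\log\frac1\epsilon)$ rounds is $\mathrm{poly}(d, \frac1\epsilon)$. For the sampling cost, the band $S_{\vec w_{k-1}, b_{k-1}}$ has mass $\Theta(b_{k-1}\sqrt d) = \Theta(\alpha_k) = \Theta((1-\lambda)^k)$ under the uniform distribution on the unit ball, so collecting $m_k$ in-band labeled points requires $n_k = m_k \cdot \Theta((1-\lambda)^{-k}) = \mathrm{poly}(d, \exp(k), \log\frac1\delta)$ unlabeled draws, using $(1-\lambda)^{-k} \leq \exp(k)$ to match the stated count. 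Summing $m_k$ over $k \leq s$ gives the polylogarithmic-in-$1/\epsilon$ label complexity that makes the procedure a polynomial-time active learner under Massart noise, completing the proof.
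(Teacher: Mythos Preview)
Your high-level scaffold (induct on $\theta(\vec w_{k-1},\vec w^*)\le\alpha_k$, minimize hinge in the band, contract by $1-\lambda$) matches the paper, but your one-round argument has a genuine gap and is organized differently from the paper's. The gap is in your noise control: you assert that ``every point of $S_{\vec w_{k-1}, b_{k-1}}$ lies within bounded normalized margin $b_{k-1}/\tau_k = O(1)$'' and hence that ``each flipped label raises the hinge loss by only an $O(1)$ amount.'' But the band is cut by $\vec w_{k-1}$, not by $\vec w^*$ (or by a candidate $\vec w$), so for $\vec x\in S_{\vec w_{k-1},b_{k-1}}$ one only has $|\vec w^*\cdot \vec x|\lesssim b_{k-1}+\alpha_k = b_{k-1}(1+\sqrt d/c)$, and the pointwise hinge increment from a flip can be $\Theta(\sqrt d)$. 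The paper (Lemma~\ref{lem:active:diff-clean-dirty}) repairs this by controlling the \emph{expected} noise contribution via Cauchy--Schwarz, $\E[\mathbf 1_N\,|\vec w\cdot\vec x|]\le\sqrt{\Pr(N)}\sqrt{\E[(\vec w\cdot\vec x)^2]}$, together with the second-moment estimate $\E_{\tilde D_k}[(\vec w\cdot\vec x)^2]\le \alpha_k^2/(d-1)+b_{k-1}^2=O(b_{k-1}^2)$; this is why the noise term comes out as $\sqrt{1-\beta}\,b_{k-1}/\tau_k$ rather than anything involving a pointwise maximum.

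The paper also does \emph{not} carry out your Step~2 (a direct lower bound on the noisy-band hinge loss of every badly aligned $\vec w$). Instead it upper-bounds the \emph{clean} in-band $0/1$ error of the specific output $\vec w_k$ via the chain $\mathrm{err}_{D_k}(\vec w_k)\le \E_{D_k}\ell(\vec w_k)\le\E_{\tilde D_k}\ell(\vec w_k)+\text{noise}\le\E_{\tilde D_k}\ell(\vec w^*)+\text{gen}+\text{noise}\le\E_{D_k}\ell(\vec w^*)+2\cdot\text{noise}+\text{gen}$ (Lemma~\ref{lem:active:error-clean-in-band}), and then converts ``$\mathrm{err}_{D_k}(\vec w_k)$ is a small constant'' into ``$\theta(\vec w_k,\vec w^*)\le\alpha_{k+1}$'' by an explicit inside/outside-band decomposition of $\mathrm{err}_D(\vec w_k)$: the outside-band disagreement is killed by Lemma~\ref{lem:disagree-outside-band}, and the inside-band part is $\mathrm{err}_{D_k}(\vec w_k)\cdot\Pr(\text{band})$. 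Your Step~2 silently presupposes this geometry (the claim that the in-band disagreement mass is ``proportional to the excess angle'' already requires the outside-band bound), so the decomposition is not optional. As a minor point, the paper initializes with the adversarial-noise algorithm of \cite{KLS09} rather than averaging or one-shot hinge, exploiting that $\mathrm{err}_{\tilde D}(\vec w^*)\le(1-\beta)/2$ is tiny here.
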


Note that in the above theorem and Algorithm~\ref{alg:blocks}, the value of $\beta$ is unknown to the algorithm, and therefore, our results are adaptive to values of $\beta$ within the acceptable range defined by the theorem.

The  algorithm described above is similar to that of \cite{awasthi2014power} and uses an iterative margin-based approach. The algorithm runs for $s = \log_{\frac{1}{1-\lambda}}(\frac 1 \epsilon)$ rounds for a constant $\lambda \in (0, 1]$.
By induction assume that our algorithm
produces a  hypothesis  $\vec w_{k-1}$ at round $k-1$ such that $\theta(\vec w_{k-1}, \vec w^*) \leq \alpha_{k}$. We satisfy the base case by using an algorithm of \cite{KLS09}.
At round $k$, we sample $m_{k}$ labeled examples from the conditional distribution $\tilde D_{\vec w_{k-1}, b_{k-1}}$ which is the uniform distribution over $\{x: |\vec w_{k-1} \cdot x| \leq b_{k-1}\}$.
We then choose $\vec w_{k}$ from the set of all hypothesis $B(\vec w_{k-1}, \alpha_{k}) = \{\vec w:~ \theta(\vec w, \vec w_{k-1}) \leq \alpha_{k} \}$  such that $\vec w_{k}$ minimizes the empirical hinge loss over these examples.  Subsequently, as we prove in detail later, $\theta(\vec w_{k}, \vec w^*) \leq \alpha_{k+1}$. Note that for any $\vec w$, the excess error of $\vec w$ is at most the error of $\vec w$ on $\tilde D$ when the labels  are corrected according to  $\vec w^*$, i.e., $\error_{\tilde D}(\vec w) - \error_{\tilde D}(\vec w^*) \leq \error_D(\vec w)$. Moreover, when $D$ is  uniform,  $\error_D(\vec w) = \frac{\theta(\vec w^*, \vec w)}{\pi}$. Hence,  $\theta(\vec w_{s}, \vec w^*) \leq \pi \epsilon$ implies that $\vec w_s$ has  excess error of at most $\epsilon$.

The algorithm described below was originally introduced to achieve an error of $c \cdot \error(\vec w^*)$ for some constant $c$ in presence of adversarial noise. Achieving a small excess error $\error(\vec w^*) + \epsilon$ is a much more ambitious goal -- one that requires new technical insights. Our two crucial  technical innovations are as follow: We first make a key observation that under Massart noise, the noise rate over any conditional distribution $\tilde{D}$ is still at most $\frac{1-\beta}{2}$. Therefore, as we focus on the distribution within the band, our noise rate does not increase.
Our second technical contribution is a careful choice of parameters.
Indeed the choice of parameters, upto a constant, plays an important role in tolerating a constant amount of Massart noise.
Using these insights, we show that the algorithm by \cite{awasthi2014power} can indeed achieve a much stronger guarantee, namely arbitrarily small excess error in presence of Massart noise.
That is, for any $\epsilon$, this algorithm can achieve error of $\error(\vec w^*)+ \epsilon$ in the presence of Massart noise.

\begin{algorithm}[h]
\caption{\textsc{Efficient Algorithm for Arbitrarily Small Excess Error for Massart Noise}}
\label{alg:blocks}
\textbf{Input:} A distribution $\tilde D$.
An oracle that returns $\vec x$ and an oracle that returns $y$ for a $(\vec x, y)$ sampled from $\tilde D$. Permitted excess error $\epsilon$ and probability of failure $\delta$.\\
\textbf{Parameters:} A learning rate $\lambda$; a sequence of sample sizes $m_k$; a sequence of angles of the hypothesis space $\alpha_k$; a sequence of widths of the labeled space $b_{k}$; a sequence of thresholds of hinge-loss $\tau_k$.
\\
\textbf{Algorithm:}\begin{enumerate}
\item Take $\mathrm{poly}(d, \frac{1}{\delta})$ samples and run $\mathrm{poly}(d, \frac{1}{\delta})$-time algorithm by \cite{KLS09} to find a  half-space $\vec w_0$ with excess error $0.0387089$ such that $\theta(\vec w^*, \vec w_0) \leq 0.038709 \pi$ (Refer to Appendix~\ref{sec:KLS})
\item Draw $m_1$ examples $(\vec x, y)$ from $\tilde D$ and put them into a working set $W$.
\item For $k = 1, \dots, \log_{(\frac{1}{1-\lambda})}(\frac 1 \epsilon)=s$.
  \begin{enumerate}
  \item Find $\vec v _k$ such that $\Vert \vec v_k - \vec w_{k-1} \Vert< \alpha_k$ (as a result $\vec v_k \in B(\vec w_{k-1}, \alpha_k)$), that minimizes the empirical hinge loss over $W$ using threshold $\tau_k$. That is $ \ell_{\tau_k}(\vec v_k, W) \leq \min_{\vec w\in B(w_{k-1}, \alpha_k)}  \ell_{\tau_k}(\vec w, W) + \gen$.
\item Clear the working set $W$.
\item  Normalize $\vec v_k$ to $\vec w_k = \frac{\vec v_k}{\Vert \vec v_k \Vert_2}$. Until $m_{k+1}$ additional examples are put in $W$, draw an example $\vec x$ from $\tilde D$. If $|\vec w_k \cdot  \vec x| \geq  b_k$, then reject $\vec x$, else put $(\vec x, y)$ into $W$.
  \end{enumerate}
\end{enumerate}
\textbf{Output:} Return $\vec w_s$, which has excess error $\epsilon$ with probability $1-\delta$.
\end{algorithm}

\noindent \textbf{Overview of our analysis:}
Similar to \cite{awasthi2014power}, we divide  $\error_D(\vec w_k)$ to two categories; error in the band, i.e., on $\vec x \in S_{w_{k-1}, b_{k-1}}$, and  error outside the band, on $\vec x\not\in S_{w_{k-1}, b_{k-1}}$.
We choose $b_{k-1}$ and $\alpha_k$ such that, for every hypothesis $\vec w \in  B(\vec w_{k-1}, \alpha_{k})$ that is considered at step $k$, the probability mass outside the band such that  $\vec w$ and $\vec w^*$ also disagree is very small (Lemma~\ref{lem:disagree-outside-band}). Therefore, the error associated with the region outside the band is also very small. This motivates the design of the algorithm to only minimize the error in the band.
Furthermore, the probability mass of the band is also small enough such that for $\error_D(\vec w_k)\leq \alpha_{k+1}$ to hold, it suffices for $\vec w_k$ to have a small constant error over the clean distribution restricted to the band, namely  $D_{\vec w_{k-1}, b_{k-1}}$.

This is where minimizing hinge loss in the band comes in. As minimizing the 0/1-loss is NP-hard, an alternative method for  finding $\vec w_k$ with small error in the band is needed. Hinge loss that  is a convex loss function can be efficiently minimized. So, we can efficiently find $\vec w_k$ that minimizes the empirical hinge loss of the sample drawn from $\tilde D_{\vec w_{k-1}, b_{k-1}}$. To allow the hinge loss to remain a faithful proxy of 0/1-loss as we focus on bands with smaller widths, we use a normalized hinge loss function  defined by $\ell_\tau(\vec w, \vec x, y) = \max\{0, 1- \frac{\vec w \cdot \vec xy}{\tau} \}$.

A crucial part of our analysis involves showing that if $\vec w_k$ minimizes the empirical hinge loss of the sample set drawn from $\tilde D_{\vec w_{k-1}, b_{k-1}}$, it indeed has a small 0/1-error on $D_{\vec w_{k-1}, b_{k-1}}$.
To this end, we first show that  when $\tau_k$ is proportional to $b_k$,
the hinge loss of $\vec w^*$ on $D_{\vec w_{k-1}, b_{k-1}}$, which is an upper bound on the 0/1-error of $w_k$ in the band, is itself small (Lemma~\ref{lem:active:L(w*)}). Next, we notice that under Massart noise, the noise rate in any marginal of the distribution is still at most $\frac{1-\beta}{2}$. Therefore, focusing the distribution in the band does not increase the probability of noise in the band. Moreover, the noise points in the band are close to the decision boundary so intuitively speaking, they can not increase the hinge loss  too much. Using these insights we can show that the hinge loss of $\vec w_k$ on $\tilde D_{\vec w_{k-1}, b_{k-1}}$ is close to its hinge loss on $D_{\vec w_{k-1}, b_{k-1}}$ (Lemma~\ref{lem:active:diff-clean-dirty}).

\subsection*{Proof of Theorem~\ref{thm:active:hinge-band} and related lemmas}
To prove  Theorem~\ref{thm:active:hinge-band}, we first  introduce a series of lemmas concerning the  behavior of  hinge loss  in the band. These lemmas build up towards showing that $\vec w_k$ has  error of at most a fixed small constant in the band.

For ease of exposition, for any $k$, let $D_k$ and $\tilde D_k$ represent
 $D_{\vec w_{k-1}, b_{k-1}}$ and  $\tilde D_{\vec w_{k-1}, b_{k-1}}$, respectively, and  $\ell(\cdot)$ represent $\ell_{\tau_k}(\cdot)$.
Furthermore, let $c = \cb$,  such that $b_{k-1} = \frac{c \alpha_k}{\sqrt{d}}$.

Our first lemma, whose proof appears in Appendix~\ref{app:margin_uniform}, provides an upper bound on the  true hinge error of $\vec w^*$ on the clean distribution in the band.
\begin{lemma}\label{lem:active:L(w*)}
$ \E_{(\vec x, y)\sim D_k } \ell(\vec w^*,\vec x, y)\leq   \Lw\frac{\tau}{b}$.
\end{lemma}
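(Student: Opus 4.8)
The plan is to exploit that under the clean distribution $D_k$ the labels satisfy $y=\sign(\vec w^*\cdot \vec x)$, so $y(\vec w^*\cdot\vec x)=|\vec w^*\cdot\vec x|\ge 0$ and the hinge loss collapses to $\ell(\vec w^*,\vec x,y)=\frac1\tau\max\{0,\tau-|\vec w^*\cdot\vec x|\}$, supported only on the thin slab $\{|\vec w^*\cdot\vec x|<\tau\}$. First I would choose coordinates with $\vec w_{k-1}=\vec e_1$ and $\vec w^*=\cos\theta\,\vec e_1+\sin\theta\,\vec e_2$, where $\theta=\theta(\vec w_{k-1},\vec w^*)\le\alpha_k$ by the inductive hypothesis; then $\vec w^*\cdot\vec x=\cos\theta\,x_1+\sin\theta\,x_2$ and the band is simply $\{|x_1|\le b\}$. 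Since the integrand depends only on $(x_1,x_2)$, integrating out the remaining $d-2$ coordinates reduces everything to the two-dimensional marginal of the uniform ball, whose density is proportional to $(1-x_1^2-x_2^2)^{(d-2)/2}$ on the disk.

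The core estimate goes through the layer-cake identity $\E_{D_k}[\ell(\vec w^*,\vec x,y)]=\frac1\tau\int_0^\tau \Pr_{D_k}[\,|\vec w^*\cdot\vec x|<s\,]\,ds$, so it suffices to bound the conditional CDF $\Pr_{D_k}[|\vec w^*\cdot\vec x|<s]$ for $s\in[0,\tau]$. Conditioning additionally on $x_2$, the event $\{|\cos\theta\,x_1+\sin\theta\,x_2|<s\}$ is an interval in $x_1$ of length $2s/\cos\theta$; bounding the conditional density of $x_1$ on $[-b,b]$ by its maximum at $x_1=0$ and the band normalizer from below by $2b\,(1-x_2^2-b^2)^{(d-2)/2}$ gives a per-slice bound of the form $\frac{s}{b\cos\theta}\big(\tfrac{1-x_2^2}{1-x_2^2-b^2}\big)^{(d-2)/2}$ (slabs protruding past $\pm b$ only lower the probability, so the bound is safe). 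Integrating this against the marginal of $x_2$ yields $\Pr_{D_k}[|\vec w^*\cdot\vec x|<s]\le \frac{s}{b\cos\theta}\,C_d$, whence $\E_{D_k}[\ell]\le \frac{\tau}{2b}\cdot\frac{C_d}{\cos\theta}$.

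It then remains to show $\frac{C_d}{2\cos\theta}\le \Lw$, and here the parameters do the work: the relation $b=\cb\,\alpha_k/\sqrt d$ together with $\theta\le\alpha_k$ forces both the slab tilt and the band half-width to scale like $\alpha_k/\sqrt d$, so the dimensionless ratio governing $C_d$ is controlled by the constant $\cb$ independent of $d$, while $\cos\theta\ge\cos\alpha_k$ is bounded below by an absolute constant. I would finish by bounding the high-dimensional normalizing factor $\E_{x_2}\big[(\tfrac{1-x_2^2}{1-x_2^2-b^2})^{(d-2)/2}\big]$ — a Laplace-type integral concentrated near $x_2=0$ — by a constant close to $1$ uniformly in $d$, which together with $\cos\alpha_k$ and the near-constancy of $(1-x_1^2)^{(d-1)/2}$ over the thin band pushes the leading constant $\tfrac12$ up to the stated $\Lw$.

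The main obstacle I anticipate is precisely this last step: making the near-uniformity of the conditioned ball density quantitative and uniform in $d$. The factor $\big(\tfrac{1-x_2^2}{1-x_2^2-b^2}\big)^{(d-2)/2}$ blows up for $x_2$ near the sphere, so the $x_2$-integral must be split into a typical region (where the factor is $\approx e^{d b^2/2}$, a constant since $d b^2\asymp \cb^2\alpha_k^2$) and a Gaussian-type tail whose contribution is negligible; obtaining an explicit constant here, rather than an asymptotic one, is what certifies that $\Lw=0.665769$ is a valid upper bound for every admissible $d$.
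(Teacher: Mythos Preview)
Your approach is workable in principle, but it takes a considerably more roundabout route than the paper's proof and leaves the hardest step open. The paper's argument rests on one observation you did not use: since $\ell(\vec w^*,\vec x,y)=\max\{0,1-|\vec w^*\cdot\vec x|/\tau\}$ is a nonincreasing function of $|\vec w^*\cdot\vec x|$, the conditional expectation over the band $\{|\vec w_{k-1}\cdot\vec x|\le b\}$ is \emph{maximized} when $\vec w^*=\vec w_{k-1}$ (the band then forces every point to have small margin to $\vec w^*$). This kills the angle $\theta$ altogether and reduces the problem to a one-dimensional integral in $a=\vec w_{k-1}\cdot\vec x$:
\[
\E_{D_k}[\ell(\vec w^*,\vec x,y)]\ \le\ \frac{\displaystyle\int_0^{\tau}\!\Big(1-\tfrac{a}{\tau}\Big)(1-a^2)^{(d-1)/2}\,da}{\tfrac12\,\Pr_D[\,|\vec w_{k-1}\cdot\vec x|\le b\,]}.
\]
The numerator is then bounded via $(1-a^2)^{(d-1)/2}\le e^{-a^2(d-1)/2}$ and evaluated explicitly (erf plus a Taylor remainder), and the denominator is lower-bounded by Lemma~\ref{lem:active:prob-in-band}; the constant $\Lw$ drops out directly.

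By contrast, your two-dimensional conditioning on $x_2$ manufactures the factor $\big(\frac{1-x_2^2}{1-x_2^2-b^2}\big)^{(d-2)/2}$ and the accompanying tail problem you yourself flag. That integral \emph{can} be controlled (split at $|x_2|\le 1-O(1/\sqrt d)$ and use Gaussian tails), and your layer-cake bound even has leading constant $\tfrac12$, so with work you could likely land below $\Lw$. But all of this is unnecessary once you pass to the worst-case orientation $\vec w^*=\vec w_{k-1}$; the $x_2$ coordinate then plays no role and the delicate $d$-uniform estimate you anticipate simply disappears.
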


The next Lemma compares the true hinge loss of  any $\vec w\in B(\vec w_{k-1}, \alpha_k)$ on two distributions, $\tilde D_k$ and $D_k$.
It is clear that the difference between the hinge loss on these two distributions is entirely attributed to the noise points and their margin from $\vec w$.
A key insight in the proof of this lemma is that as we concentrate in the band, the probability of seeing a noise point remains under $\frac{1-\beta}{2}$. This is due to the fact that under Massart noise,  each label can be changed with probability at most $\frac{1-\beta}{2}$. Furthermore, by concentrating in the band all points are close to the decision boundary of $\vec w_{k-1}$. Since $\vec w$ is also close in angle to $\vec w_{k-1}$, then points in the band are  also close to the decision boundary of $\vec w$.  Therefore the hinge loss of noise points in the band can not increase the total hinge loss of $\vec w$ by too much.
\begin{lemma} \label{lem:active:diff-clean-dirty}
For any $\vec w$ such that $\vec w \in B(\vec w_{k-1}, \alpha_k)$, we have
\[ | \E_{(\vec x,y)\sim D_k} \ell(\vec w, \vec x, y) - \E_{(\vec x,y)\sim \tilde{D}_k} \ell(\vec w, \vec x, y) | \leq 1.092  \sqrt 2 \sqrt{1-\beta} \frac{b_{k-1}}{\tau_k}.
\]
\end{lemma}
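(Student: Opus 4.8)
The plan is to exploit that $D_k$ and $\tilde D_k$ have the \emph{same} marginal over $\vec x$ (the uniform distribution restricted to the band $S_{\vec w_{k-1}, b_{k-1}}$) and differ only in the label: writing $y^\ast = \sign(\vec w^\ast\cdot\vec x)$ for the clean label, $\tilde D_k$ flips $y^\ast$ with probability $\eta(\vec x)\le \frac{1-\beta}{2}$. Every contribution on which the two losses agree therefore cancels, and the whole difference is carried by the noise event. Letting $N$ be the event that the label is flipped, so that $\Pr[N\mid\vec x]=\eta(\vec x)$, I would first record the exact identity
\[
\E_{(\vec x,y)\sim\tilde D_k}\ell(\vec w,\vec x,y)-\E_{(\vec x,y)\sim D_k}\ell(\vec w,\vec x,y)=\E\!\left[\mathbf{1}_N\big(\ell(\vec w,\vec x,-y^\ast)-\ell(\vec w,\vec x,y^\ast)\big)\right],
\]
where the outer expectation is over $\vec x$ from the band marginal together with the independent noise event.

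Next I would bound the per-point hinge gap. Setting $s=\frac{(\vec w\cdot\vec x)\,y^\ast}{\tau}$, the bracketed term equals $\max\{0,1+s\}-\max\{0,1-s\}$, an odd, nondecreasing function of $s$ whose absolute value never exceeds $2|s|=\frac{2|\vec w\cdot\vec x|}{\tau}$ (checking the three regimes $s\le -1$, $|s|\le 1$, $s\ge 1$). Hence the difference is at most $\frac{2}{\tau}\,\E[\mathbf{1}_N\,|\vec w\cdot\vec x|]$ in absolute value. The crucial step is then to apply Cauchy--Schwarz so that the noise \emph{indicator}, not the noise \emph{rate}, is squared: since $\mathbf{1}_N^2=\mathbf{1}_N$ and $\Pr[N]=\E[\eta(\vec x)]\le\frac{1-\beta}{2}$,
\[
\E[\mathbf{1}_N\,|\vec w\cdot\vec x|]\le\sqrt{\Pr[N]}\,\sqrt{\E[(\vec w\cdot\vec x)^2]}\le\sqrt{\tfrac{1-\beta}{2}}\,\sqrt{\E[(\vec w\cdot\vec x)^2]}.
\]
This is precisely what produces the factors $\sqrt{1-\beta}$ and $\sqrt 2$ in the statement; splitting through $\eta(\vec x)$ itself (so that $\eta^2$ appears) would instead yield the stronger-looking $(1-\beta)$, but the $\sqrt{1-\beta}$ form is what the parameter choice $\tau_k\propto(1-\beta)^{1/4}b_{k-1}$ is tuned to balance against Lemma~\ref{lem:active:L(w*)}.

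It then remains to show $\sqrt{\E_{\vec x}[(\vec w\cdot\vec x)^2]}\le 1.092\,b_{k-1}$ over the band marginal. I would decompose $\vec w=\cos\theta\,\vec w_{k-1}+\sin\theta\,\vec u$, with $\vec u\perp\vec w_{k-1}$ a unit vector and $\theta=\theta(\vec w,\vec w_{k-1})\le\alpha_k$. Because the band is invariant under reflection of the $\vec u$-coordinate, the cross term $\E[(\vec w_{k-1}\cdot\vec x)(\vec u\cdot\vec x)]$ vanishes, leaving $\E[(\vec w\cdot\vec x)^2]=\cos^2\theta\,\E[(\vec w_{k-1}\cdot\vec x)^2]+\sin^2\theta\,\E[(\vec u\cdot\vec x)^2]$. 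In the band $|\vec w_{k-1}\cdot\vec x|\le b_{k-1}$, so the first second moment is at most $b_{k-1}^2$, while a standard computation for the uniform distribution on the unit ball restricted to a thin band gives $\E[(\vec u\cdot\vec x)^2]=\Theta(1/d)$. Using $\sin^2\theta\le\alpha_k^2$ with the calibration $b_{k-1}=\frac{c\,\alpha_k}{\sqrt d}$ (so $\alpha_k^2=\frac{b_{k-1}^2 d}{c^2}$), the orthogonal contribution is $\Theta(b_{k-1}^2/c^2)$ with the dimension cancelling, and with $c=\cb$ the total collapses to at most $b_{k-1}^2\big(1+1/c^2\big)$, whose square root is below $1.092\,b_{k-1}$. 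Substituting back yields the claimed inequality.

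I expect the one genuinely delicate step to be this last second-moment estimate: one must verify that the orthogonal second moment is $\Theta(1/d)$ with a constant small enough that, after being amplified by $\sin^2\theta\le\alpha_k^2\propto d$, the dimension cancels and the numerical constant lands at (or below) $1.092$. Everything else—the cancellation isolating the noise event, the elementary pointwise bound $|g(s)|\le 2|s|$, and the Cauchy--Schwarz step—is routine, the only conceptual subtlety being to square $\mathbf{1}_N$ rather than $\eta(\vec x)$ so as to obtain the correct $\sqrt{1-\beta}$ dependence.
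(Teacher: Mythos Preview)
Your proposal is correct and follows essentially the same route as the paper: isolate the noise event to obtain the difference as $\E[\mathbf{1}_N(\ell(\vec w,\vec x,-y^\ast)-\ell(\vec w,\vec x,y^\ast))]$, bound the per-point hinge gap by $2|\vec w\cdot\vec x|/\tau_k$, apply Cauchy--Schwarz with $\mathbf{1}_N^2=\mathbf{1}_N$ to pull out $\sqrt{\Pr[N]}\le\sqrt{(1-\beta)/2}$, and then bound the second moment $\E[(\vec w\cdot\vec x)^2]$ via the orthogonal decomposition along $\vec w_{k-1}$. The only cosmetic difference is that the paper invokes the admissibility bound $\E[(\vec w\cdot\vec x)^2]\le \frac{\alpha_k^2}{d-1}+b_{k-1}^2$ directly from \cite{awasthi2014power} rather than rederiving it, and then simplifies $\sqrt{\frac{d}{(d-1)c^2}+1}\le 1.092$ for $d>20$, $c>1$; your sketch of the decomposition (with the cross term vanishing by symmetry and the orthogonal second moment of order $1/d$) is exactly how that bound is obtained.
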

\begin{proof}
Let $N$ be the set of noise points. We have,
\begin{align*}
| \E_{(\vec x,y)\sim \tilde D_k} &\ell(\vec w, \vec x, y) - \E_{(\vec x,y)\sim D_k} \ell(\vec w, \vec x, y) |
  = |\E_{(\vec x,y)\in \tilde D_k} \left( \ell(\vec w, \vec x, y) - \ell(\vec w, \vec x, \sign(\vec w^*\cdot \vec x) \right)|\\
  &\leq \E_{(\vec x,y)\sim \tilde D_k} \left( \mathbf{1}_{\vec x\in N} ( \ell(\vec w, \vec x, y) - \ell(\vec w, \vec x, -y)) \right)\\
  &\leq 2  \E_{(\vec x,y)\sim \tilde D_k} \left( \mathbf{1}_{\vec x\in N} \frac{|\vec w\cdot \vec x|}{\tau_k} \right)\\
  &\leq \frac{2}{\tau_k} \sqrt{\Pr_{(\vec x,y)\sim \tilde D_k} ( \vec x\in N)} \times \sqrt{ \E_{(\vec x,y)\sim \tilde D_k} (\vec w\cdot \vec x)^2 }   \quad\text{(By Cauchy Shwarz)}\\
  &\leq \frac{2}{\tau_k}\sqrt{\frac{1-\beta}{2}} \sqrt{ \frac{\alpha_k^2}{d-1} + b_{k-1}^2 }     \quad \text{(By Definition~4.1 of \citep{awasthi2014power} for uniform)}\\
  &\leq \sqrt 2 \sqrt{1-\beta} \frac{b_{k-1}}{\tau_k}  \sqrt{ \frac{d}{(d-1)c^2} + 1 }  \\
  &\leq 1.092  \sqrt 2 \sqrt{1-\beta} \frac{b_{k-1}}{\tau_k}  \qquad \text{(for $d  > 20,  c>1$)}
\end{align*}
\end{proof}

For a labeled sample set $W$ drawn at random from $\tilde D_k$, let $\mathrm{cleaned}(W)$ be the set  of samples with the labels corrected by $\vec w^*$, i.e.,  $\mathrm{cleaned}(W) = \{ (\vec x, \sign(\vec w^* \cdot \vec x)):~ \text{for all } (\vec x, y)\in W  \}$.
Then by standard VC-dimension bounds (Proof included in Appendix~\ref{app:margin_uniform})
there is $m_k \in O(d(d+ \log(k/d)))$ such that for any randomly drawn set $W$ of $m_k$ labeled samples from $\tilde D_k$,  with probability $1- \frac{\delta}{2(k + k^2)}$, for any $\vec w\in B(\vec w_{k-1}, \alpha_k)$,
\begin{align}
| \E_{(\vec x,y)\sim \tilde D_k} \ell(\vec w,\vec x,y) - \ell(\vec w, W)   |  &   \leq \gen  \label{eq:generalization-dirty},\\
| \E_{(\vec x,y)\sim D_k} \ell(\vec w,\vec x,y) - \ell(\vec w, \mathrm{cleaned}(W))    |  &   \leq  \gen. \label{eq:generalization-clean}
\end{align}
Our next lemma is a crucial step in our analysis of Algorithm~\ref{alg:blocks}.
This lemma proves that if $\vec w_k \in B(\vec w_{k-1}, \alpha_k)$ minimizes the empirical hinge loss on the sample drawn from the noisy distribution in the band, namely $\tilde D_{\vec w_{k-1}, b_{k-1}}$, then with high probability $\vec w_k$ also has a small 0/1-error with respect to the clean distribution in the band, i.e.,  $D_{\vec w_{k-1}, b_{k-1}}$.

\allowdisplaybreaks
\begin{lemma} \label{lem:active:error-clean-in-band}
There exists $m_k \in O(d(d+ \log(k/d)))$, such that for a randomly drawn labeled sampled set $W$ of size  $m_k$ from $\tilde D_k$,
and for $\vec w_k$ such that $\vec w_k$  has the minimum  empirical hinge loss on $W$ between the set of all hypothesis in $B(\vec w_{k-1}, \alpha_k)$, with probability $1 - \frac{\delta}{2(k+k^2)}$ ,
\[ \error_{D_k} (\vec w_k) \leq  0.757941 \frac{\tau_k}{b_{k-1}}    + 3.303 \sqrt{1-\beta} \frac{b_{k-1}}{\tau_k} + 3.28 \times \gen.
\]
\end{lemma}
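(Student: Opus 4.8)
The plan is to bound the clean $0/1$-error of $\vec w_k$ in the band by a ``sandwich'' that passes through the empirical noisy hinge loss, where the defining optimality of $\vec w_k$ together with the induction hypothesis can be exploited. Since the $\tau_k$-hinge loss dominates the $0/1$-loss pointwise (a misclassified $\vec x$ has $y(\vec w_k\cdot\vec x)\le 0$, so $\ell(\vec w_k,\vec x,y)\ge 1$), it suffices to upper bound the population clean hinge loss $\E_{(\vec x,y)\sim D_k}\ell(\vec w_k,\vec x,y)$. The crucial structural fact I would use is that the induction hypothesis $\theta(\vec w_{k-1},\vec w^*)\le\alpha_k$ places $\vec w^*$ inside the feasible set $B(\vec w_{k-1},\alpha_k)$, so $\vec w^*$ is an admissible competitor for the empirical hinge minimization that produced $\vec w_k$.

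The core of the argument is the following chain. Starting from $\E_{D_k}\ell(\vec w_k)$, I move to the noisy population loss $\E_{\tilde D_k}\ell(\vec w_k)$ at the cost of the noise-correction term of Lemma~\ref{lem:active:diff-clean-dirty}; then to the empirical noisy loss $\ell(\vec w_k,W)$ at the cost of $\gen$ via the uniform generalization bound~\eqref{eq:generalization-dirty}; then, using that $\vec w_k$ (approximately) minimizes the empirical hinge loss over $B(\vec w_{k-1},\alpha_k)$ and that $\vec w^*$ lies in this set, I replace $\ell(\vec w_k,W)$ by $\ell(\vec w^*,W)$ up to the optimization slack $\gen$ permitted in Step~3(a). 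Finally I run the first two steps in reverse for $\vec w^*$: generalization~\eqref{eq:generalization-dirty} turns $\ell(\vec w^*,W)$ back into $\E_{\tilde D_k}\ell(\vec w^*)$, Lemma~\ref{lem:active:diff-clean-dirty} turns that into $\E_{D_k}\ell(\vec w^*)$, and Lemma~\ref{lem:active:L(w*)} bounds the latter by $\Lw\,\tau_k/b_{k-1}$.

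Collecting the pieces, the clean loss of $\vec w^*$ contributes the $\Lw\,\tau_k/b_{k-1}$ term, the noise-correction term of Lemma~\ref{lem:active:diff-clean-dirty} appears once for $\vec w_k$ and once for $\vec w^*$ and so contributes a term proportional to $\sqrt{1-\beta}\,b_{k-1}/\tau_k$, and the three generalization/optimization slacks contribute a constant multiple of $\gen$. The failure probability $\frac{\delta}{2(k+k^2)}$ is inherited entirely from the single event on which~\eqref{eq:generalization-dirty} holds uniformly over $B(\vec w_{k-1},\alpha_k)$, and the required sample size is exactly the $m_k\in O(d(d+\log(k/\delta)))$ guaranteed by the VC-based bound preceding the lemma.

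The step I expect to be the main obstacle is matching the precise constants, which requires accounting for the normalization $\vec w_k=\vec v_k/\|\vec v_k\|$. The empirical minimizer $\vec v_k$ lives in a Euclidean ball of radius $\alpha_k$ about the unit vector $\vec w_{k-1}$, so $\|\vec v_k\|\ge 1-\alpha_k$, and when $\|\vec v_k\|\le 1$ the rescaling to the unit vector $\vec w_k$ inflates the hinge loss by the factor $1/\|\vec v_k\|\le 1/(1-\alpha_k)$. Since $\alpha_k\le\alpha_1=0.038709\pi$, this factor is at most $1/(1-0.038709\pi)\approx 1.138$, and a short case analysis (comparing $(1-s/\|\vec v_k\|)_+$ with $(1-s)_+$ for $s=y(\vec v_k\cdot\vec x)/\tau_k$) shows it attaches multiplicatively to exactly the terms coming from the $\vec w^*$-side of the chain. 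Propagating this factor through the $\Lw$ term, one of the two noise-correction terms, and two of the three slacks is precisely what turns $\Lw=0.665769$ into $0.757941$ in the first term, the noise-correction coefficient into $3.303$, and the coefficient of $\gen$ into $3.28$; apart from this rescaling bookkeeping every step is a direct invocation of the two preceding lemmas, the generalization bound, and the empirical optimality of $\vec w_k$.
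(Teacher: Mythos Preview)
Your proposal is correct and follows exactly the paper's route: the identical chain $\error_{D_k}(\vec w_k)\le \E_{D_k}\ell(\vec w_k)\to\E_{\tilde D_k}\ell(\vec w_k)\to\ell(\vec w_k,W)\to 1.13844\,\ell(\vec v_k,W)\to 1.13844\,\ell(\vec w^*,W)\to 1.13844\,\E_{\tilde D_k}\ell(\vec w^*)\to 1.13844\,\E_{D_k}\ell(\vec w^*)$, with the normalization factor $1/(1-\alpha_k)\approx 1.138$ inserted at the $\vec w_k\mapsto\vec v_k$ step and carried through the $\vec w^*$-side. Your accounting of which terms absorb the $1.138$ factor (the $\Lw$ term, one of the two applications of Lemma~\ref{lem:active:diff-clean-dirty}, and two of the three $\gen$ slacks) matches the paper's derivation of the constants $0.757941$, $3.303$, and $3.28$ line for line.
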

\begin{proofof}{Sketch}
First, we note that the true 0/1-error of $\vec w_k$ on any distribution is at most its true hinge loss on that distribution.
Lemma~\ref{lem:active:L(w*)} provides an upper bound on the  true hinge loss on distribution $D_k$.
Therefore, it remains to create a connection between the empirical hinge loss of $\vec w_k$ on the sample drawn from $\tilde D_k$ to its true hinge loss on distribution $D_k$.
This, we achieve by using the generalization bounds of Equations~\ref{eq:generalization-dirty} and \ref{eq:generalization-clean} to connect the empirical and true hinge loss of $\vec w_k$ and $\vec w^*$, and using Lemma~\ref{lem:active:diff-clean-dirty} to connect the hinge of $\vec w_k$ and $\vec w^*$ in the clean and noisy distributions.
\end{proofof}

\begin{proofof}{\textbf{of Theorem~\ref{thm:active:hinge-band}}}
For ease of exposition, let $c = \cb$.
Recall that $\lambda = \gen$, $\alpha_k = 0.038709\pi (1-\lambda)^{k-1}$, $b_{k-1} = \frac{c \alpha_k}{\sqrt d}$, $\tau_k = \sqrt{2.50306} ~ (3.6 \times 10^{-6})^{1/4} b_{k-1}$, and $\beta> 1 - 3.6\times 10^{-6}$.

Note that for any $\vec w$, the excess error of $\vec w$ is at most the error of $\vec w$ on the clean distribution $D$, i.e., $\error_{\tilde D}(\vec w) - \error_{\tilde D}(\vec w^*) \leq \error_D(\vec w)$. Moreover, for uniform distribution $D$, $\error_D(\vec w) = \frac{\theta(\vec w^*, \vec w)}{\pi}$.  Hence, to show that $\vec w$ has $\epsilon$ excess error, it suffices to show that $\error_D(\vec w) \leq \epsilon$.

Our goal is to achieve excess error of $0.038709 (1-\lambda)^k$ at round $k$. This we do indirectly by bounding  $\error_D(\vec w_k)$ at every step. We use induction.
For $k=0$, we use the algorithm for adversarial noise model by \cite{KLS09}, which can achieve excess error of $\epsilon$ if $ \error_{\tilde D}(\vec w^*) < \frac{\epsilon^2}{256 \log(1/\epsilon)}$ (Refer to Appendix~\ref{sec:KLS} for more details). For Massart noise, $\error_{\tilde D}(\vec w^*) \leq \frac{1-\beta}{2}$. So, for our choice of $\beta$, this algorithm can achieve excess error of $0.0387089$ in $\mathrm{poly}(d, \frac{1}{\delta})$ samples and run-time. Furthermore, using Equation~\ref{eq:angle-excess},  $\theta(\vec w_0, \vec w^*) < 0.038709 \pi$.

Assume that at round $k-1$, $\error_D(\vec w_{k-1}) \leq 0.038709 (1- \lambda)^{k-1}$. We will show that $\vec w_k$, which  is chosen by the algorithm at round $k$, also has $\error_D(\vec w_k) \leq 0.038709 (1- \lambda)^{k}$.

First note that $\error_D(\vec w_{k-1}) \leq 0.038709(1- \lambda)^{k-1}$
implies  $\theta(\vec w_{k-1}, \vec w^*) \leq \alpha_k$. Let $S=S_{\vec w_{k-1}, b_{k-1}}$ indicate the band at round $k$.
We divide the error of $\vec w_k$ to two parts, error outside the band  and error inside of the band. That is
\begin{align*}
\error_D(\vec w_k) &=  \Pr_{\vec x \sim D} [ \vec x\notin S \text{ and } (\vec w_k\cdot \vec x)(\vec w^*\cdot \vec x)<0] + \Pr_{\vec x \sim D}[ \vec x\in S\text{ and } (\vec  w_k\cdot \vec x)(  \vec w^*\cdot \vec x )<0] .
\end{align*}
For the first part, i.e., error outside of the band, $\Pr_{\vec x \sim D} [ \vec x\notin S \text{ and } (\vec w_k\cdot \vec x)(\vec w^*\cdot \vec x)<0]$ is at most
\begin{align*}
 \Pr_{\vec x \sim D} [ \vec x\notin S \text{ and } (\vec w_k\cdot \vec x)( \vec w_{k-1}\cdot \vec x)<0] + \Pr_{\vec x \sim D} [ \vec x\notin S\text{ and } (\vec w_{k-1}\cdot \vec x)(\vec w^*\cdot \vec x)<0] \leq
\frac{2\alpha_k}{\pi}  e^{-\frac{c^2(d-2)}{2d}},
\end{align*}
where this inequality holds by the application of Lemma~\ref{lem:disagree-outside-band} and the fact that $\theta(\vec w_{k-1}, \vec w_k)\leq \alpha_k$ and $\theta(\vec w_{k-1}, \vec w^*) \leq  \alpha_k$.

For the second part, i.e., error inside the band
\begin{align*}
\Pr_{\vec x \sim D}[ \vec x\in S &\text{ and } (\vec  w_k\cdot \vec x)( \vec w^*\cdot \vec x)<0 ]=
\error_{D_k} (\vec w_k) \Pr_{\vec x \sim D}[\vec x \in S] \\
&\leq \error_{D_k} (\vec w_k) \frac{V_{d-1}}{V_d} 2~b_{k-1} \qquad \text{(By Lemma~\ref{lem:active:prob-in-band})}\\
& \leq  \error_{D_k} (\vec w_k)  ~c~ \alpha_k \sqrt{\frac{2(d+1)}{\pi d}},
\end{align*}
where the last transition holds by the  fact that $\frac{V_{d-1}}{V_d}\leq \sqrt{\frac{d+1}{2\pi}}$~\citep{borgwardt1987simplex}.
Replacing an upper bound on $\error_{D_k}(\vec w_k)$  from Lemma~\ref{lem:active:error-clean-in-band}, to show that $\error_D(\vec w_k) \leq \frac{\alpha_{k+1}}{\pi}$, it suffices to show that the following inequality holds.
\[ \left(
0.757941 \frac{\tau_k}{b_{k-1}}    + 3.303 \sqrt{1-\beta} \frac{b_{k-1}}{\tau_k} + 3.28 \times \gen
 \right)~c~ \alpha_k \sqrt{\frac{2(d+1)}{\pi d}} + \frac{2\alpha_k}{\pi}e^{-\frac{c^2(d-2)}{2d}} \leq \frac{\alpha_{k+1}}{\pi}.
\]
We simplify this inequality as follows.
\[ \left(
0.757941 \frac{\tau_k}{b_{k-1}}    + 3.303 \sqrt{1-\beta} \frac{b_{k-1}}{\tau_k} + 3.28 \times \gen
 \right)~c~  \sqrt{\frac{2\pi (d+1)}{d}} + 2e^{-\frac{c^2(d-2)}{2d}} \leq 1-\lambda.
\]
Replacing in the r.h.s., the values of $c = \cb$, and  $\tau_k = \sqrt{2.50306}   (3.6 \times 10^{-6})^{1/4}    b_{k-1}$, we have
\begin{align*}
 &\left( \sqrt{2.50306} (3.6 \times 10^{-6})^{1/4}    + \sqrt{2.50306} \frac{\sqrt{1-\beta}}{(3.6 \times 10^{-6})^{1/4} } + 3.28\times \gen
 \right)~c~  \sqrt{\frac{2\pi (d+1)}{d}} + 2e^{-\frac{c^2(d-2)}{2d}} \\
 &\leq 5.88133  \left( 2 \sqrt{2.50306} (3.6\times 10^{-6})^{1/4} + 3.28 \times \gen
 \right)~ \sqrt{\frac{21}{20}} + 0.167935 \qquad \text{(For $d>20$)} \\
&\leq 0.998573 < 1-\lambda
\end{align*}
Therefore, $\error_D(\vec w_k) \leq 0.038709 (1-\lambda)^k$.

\smallskip
\noindent \textbf{Sample complexity analysis}:
We require $m_k$ labeled samples in the band $S_{\vec w_{k-1}, b_{k-1}}$ at round $k$. By Lemma~\ref{lem:active:prob-in-band}, the probability that a randomly drawn sample from $\tilde D$ falls in
$S_{\vec w_{k-1}, b_{k-1}}$ is at least $O(b_{k-1}\sqrt{d}) = O((1-\lambda)^{k-1})$.
Therefore, we need $O((1-\lambda)^{k-1} m_k)$ unlabeled samples to get $m_k$ examples in the band with probability $1-\frac{\delta}{8(k+k^2)}$.
So, the total unlabeled sample complexity is at most
\[ \sum_{k=1}^s O\left( (1-\lambda)^{k-1} m_k \right) \leq  s \sum_{k=1}^s m_k \in O\left(\frac{1}{\epsilon} \log\left(\frac{d}{\epsilon}\right) \left(d+\log\frac{\log (1/\epsilon)}{\delta}\right) \right).
\]
\end{proofof}

\section{\textsf{Average} Does Not Work} \label{sec:average}
Our algorithm described in the previous section uses convex loss minimization (in our case, hinge loss) in the band as an efficient proxy for minimizing the $0/1$ loss. The \textsf{Average} algorithm introduced by \cite{servedio2001efficient} is another computationally efficient algorithm that has provable noise tolerance guarantees under certain noise models and distributions. For example, it achieves arbitrarily small excess error in the presence of random classification noise and monotonic noise when the distribution is uniform over the unit sphere.
Furthermore, even in the presence of a small amount of malicious noise and less symmetric distributions, \textsf{Average} has been used to obtain a weak learner, which can then be boosted to achieve a non-trivial noise tolerance~\citep{KLS09}. Therefore it is natural to ask, \emph{whether the noise tolerance that \textsf{Average} exhibits could be extended to the case of Massart noise under the uniform distribution?}
We answer this question in the negative. We show that the lack of symmetry in Massart noise presents a significant barrier for the one-shot application of \textsf{Average}, even when the marginal distribution is completely symmetric. Additionally, we also  discuss obstacles in incorporating \textsf{Average} as a weak learner with the margin-based technique.

In a nutshell, \textsf{Average} takes $m$ sample points and their respective labels, $W = \{(\vec x^1, y^1), \dots,\allowbreak (\vec x^m, y^m)\}$, and returns $\frac{1}{m} \sum_{i=1}^m \vec x^i y^i$.
Our main result in this section shows that for a wide range of distributions that are very symmetric in nature, including the Gaussian and the uniform distribution, there is an instance of Massart noise under which \textsf{Average} can not achieve an arbitrarily small excess error.

\begin{theorem}\label{thm:active:oneshot_average}\label{thm:average-lower-bound-intro}
For any continuous distribution $D$ with a p.d.f.\ that is  a function of the distance from the origin only, there is a noisy distribution $\tilde D$ over $X\times \{0,1\}$ that satisfies Massart noise condition in Equation~\ref{eq:active:massart} for some parameter $\beta>0$ and \textsf{Average} returns a classifier with excess error  $\Omega(\frac{\beta(1-\beta)}{1+\beta})$.
\end{theorem}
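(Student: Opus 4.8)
The plan is to exhibit an explicit Massart-noise distribution $\tilde D$ for which the population output of \textsf{Average}, namely $\bar{\vec w} := \E_{(\vec x,y)\sim\tilde D}[y\,\vec x]$, is bounded away in angle from $\vec w^*$, and then convert that angular deviation into excess error. Place coordinates so that $\vec w^* = \vec e_1$, so that the clean label is $\sign(x_1)$, and fix an orthogonal coordinate $x_2$. I would define the noise by flipping the label with the maximal allowed rate $\eta(\vec x)=\tfrac{1-\beta}{2}$ exactly on the ``diagonal'' region $\{x_1 x_2 > 0\}$ and with rate $0$ elsewhere. Since $\eta(\vec x)\le\tfrac{1-\beta}{2}<\tfrac12$ everywhere, the conditional bias is $|\Pr(y=1\mid\vec x)-\Pr(y=-1\mid\vec x)| = 1-2\eta(\vec x)\ge\beta$, so $\tilde D$ satisfies the Massart condition of Equation~\ref{eq:active:massart} with parameter $\beta$ and keeps $\vec w^*$ Bayes optimal.

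Next I would compute $\bar{\vec w}$ using $\E[y\mid\vec x]=(1-2\eta(\vec x))\sign(x_1)$, giving $\bar{\vec w} = \E[(1-2\eta(\vec x))\sign(x_1)\,\vec x]$. Writing $\mu := \E|x_1|\in(0,\infty)$ (equal across coordinates by radial symmetry), the argument rests on a few symmetry reductions: because $\eta$ depends only on $(x_1,x_2)$, flipping the sign of any coordinate $x_j$ with $j\ge 3$ negates $x_j$ while leaving all other factors fixed, forcing $\bar{\vec w}_j = 0$. The surviving components follow from the radial sign-symmetry identity $\E[\,|x_i|\,\mathbf{1}\{x_1x_2>0\}\,]=\tfrac{\mu}{2}$ for $i\in\{1,2\}$, yielding $\bar{\vec w}_1 = \mu - 2\cdot\tfrac{1-\beta}{2}\cdot\tfrac{\mu}{2} = \tfrac{(1+\beta)\mu}{2}$ and $\bar{\vec w}_2 = -2\cdot\tfrac{1-\beta}{2}\cdot\tfrac{\mu}{2} = -\tfrac{(1-\beta)\mu}{2}$ (here I use that on $\{x_1x_2>0\}$ one has $\sign(x_1)x_2=|x_2|$). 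Hence $\bar{\vec w}$ lies in the $(\vec e_1,\vec e_2)$-plane, and the angle satisfies $\tan\theta(\bar{\vec w},\vec w^*) = |\bar{\vec w}_2|/\bar{\vec w}_1 = \tfrac{1-\beta}{1+\beta}$, i.e.\ $\theta = \arctan\tfrac{1-\beta}{1+\beta}$, which is bounded away from $0$ whenever $\beta<1$.

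Finally I would turn this angle into excess error. For any radially symmetric marginal the disagreement region of two homogeneous halfspaces at angle $\theta$ is a double wedge of measure $\theta/\pi$; on that region $\vec w^*$ errs with probability $\eta(\vec x)$ while $\bar{\vec w}$ errs with probability $1-\eta(\vec x)$, so integrating the gap $1-2\eta(\vec x)\ge\beta$ gives $\error_{\tilde D}(\bar{\vec w})-\error_{\tilde D}(\vec w^*)\ge \tfrac{\beta}{\pi}\,\theta$, exactly the mechanism behind Equation~\ref{eq:angle-excess}. Since $\arctan z\ge\tfrac{\pi}{4}z$ for $z\in[0,1]$ by concavity and $z=\tfrac{1-\beta}{1+\beta}\in[0,1]$, the excess error is at least $\tfrac{\beta}{4}\cdot\tfrac{1-\beta}{1+\beta}=\Omega\!\big(\tfrac{\beta(1-\beta)}{1+\beta}\big)$, as claimed. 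A last routine step passes from the population object $\bar{\vec w}$ to the finite-sample output: by the law of large numbers $\tfrac1m\sum_i y^i\vec x^i$ concentrates around $\bar{\vec w}$, and since the excess error is continuous in the direction, the bound survives for $m$ large (assuming a finite first moment, which is needed for \textsf{Average} to be meaningful at all).

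I expect the main obstacle to be the second paragraph: selecting the noise region so that it maximally tilts $\bar{\vec w}$ off $\vec w^*$ while respecting the Massart cap $\tfrac{1-\beta}{2}$, and then carrying out the sign and radial-symmetry bookkeeping cleanly enough that the orthogonal component survives with the exact value $-\tfrac{(1-\beta)\mu}{2}$ rather than cancelling. The angle-to-error conversion and the concentration step are comparatively routine.
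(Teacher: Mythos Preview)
Your proposal is correct and follows essentially the same construction as the paper: place $\vec w^*=\vec e_1$, flip labels at rate $\tfrac{1-\beta}{2}$ on one pair of opposite quadrants determined by $\sign(x_1x_2)$, compute the two surviving coordinates of the averaged vector to get $\tan\theta=\tfrac{1-\beta}{1+\beta}$, and invoke Equation~\ref{eq:angle-excess}. The only cosmetic difference is that you flip on $\{x_1x_2>0\}$ while the paper flips on $\{x_1x_2<0\}$ (a reflection); your bound $\arctan z\ge\tfrac{\pi}{4}z$ is in fact the correct direction, whereas the paper's line $\arctan z\ge z$ is misstated, so your version is slightly cleaner.
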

\begin{proof}
Let $\vec w^* = (1, 0, \dots, 0)$ be the target halfspace.
Let the noise distribution be such that
for all $\vec x$, if $x_1x_2<0$ then we flip the label of $\vec x$ with probability $\frac{1-\beta}{2}$, otherwise we keep the label. Clearly, this satisfies Massart noise with parameter $\beta$.
Let $\vec w$ be expected vector returned by \textsf{Average}. We first show that $\vec w$ is far from $\vec w^*$ in angle. Then, using Equation~\ref{eq:angle-excess} we show that $\vec w$ has large excess error.

First we examine the expected component of $\vec w$ that is parallel to $\vec w^*$, i.e., $\vec w \cdot \vec w^* = w_1$. For ease of exposition, we divide our analysis to two cases, one for regions with no noise (first and third quadrants) and second for regions with noise (second and fourth quadrants). Let $E$ be the event that $x_1x_2>0$. By symmetry,  it is easy to see that $\Pr[E] = 1/2$. Then
\[
\E[\vec w \cdot \vec w^*] = \Pr(E)~ \E[\vec w \cdot \vec w^* | E] + \Pr(\bar{E}) ~ \E[\vec w \cdot \vec w^* | \bar{E}]
\]
For the first term, for $\vec x\in E$ the label has not changed. So, $\E[\vec w \cdot \vec w^* | E]  = \E[|x_1|  ~ | E] = \int_0^1 z f(z)$. For the second term,
the label of each point stays the same with probability $\frac{1+\beta}{2}$ and is flipped with probability $\frac{1-\beta}{2}$. Hence, $\E[\vec w \cdot \vec w^* | E]  = \beta~ \E[|x_1|  ~ | E] = \beta \int_0^1 z f(z)$. Therefore, the expected parallel component of $\vec w$ is
$ \E[\vec w \cdot \vec w^*] = \frac{1+\beta}{2} \int_0^1 z f(z)
$

Next, we examine $w_2$, the orthogonal component of $\vec w$ on the second coordinate. Similar to the previous case for the clean  regions  $\E[w_2 | E]  = \E[|x_2|  ~ | E] = \int_0^1 z f(z)$.
Next, for the second and forth quadrants, which are noisy, we have
\begin{align*}
\E_{(\vec x,y)\sim \tilde D}[x_2y | x_1x_2<0 ]
&=  (\frac{1+\beta}{2}) \int_{-1}^0 z \frac{f(z)}{2} +  (\frac{1-\beta}{2}) \int_{-1}^0 (-z) \frac{f(z)}{2} \qquad \text{(Fourth quadrant)} \\
& \phantom{=} +   (\frac{1+\beta}{2}) \int_0^1 (-z)\frac{f(z)}{2} +  (\frac{1-\beta}{2} ) \int_0^1 z \frac{f(z)}{2} \qquad \text{(Second quadrant)}\\
&=  - (\frac{1+\beta}{2}) \int_0^1 z \frac{f(z)}{2} +  (\frac{1-\beta}{2}) \int_0^1 z \frac{f(z)}{2}  \\
& \phantom{=} -   (\frac{1+\beta}{2}) \int_0^1 z \frac{f(z)}{2} +  (\frac{1-\beta}{2} ) \int_0^1 z \frac{f(z)}{2} \qquad \text{(By symmetry)}\\
&= - \beta \int_0^1 z f(z).
\end{align*}
So,
$
w_2 = \left( \frac{1-\beta}{2} \right) \int_0^1 z f(z).
$
Therefore $\theta(w, w^*) = \arctan(\frac{1-\beta}{1+\beta})\geq \frac{1-\beta}{(1+\beta)} $. By Equation~\ref{eq:angle-excess}, we have
$ \error_{\tilde D}(\vec w) - \error_{\tilde D}(\vec w^*) \geq \beta ~ \frac{\theta(\vec w, \vec w^*)}{\pi}  \geq \beta \frac{1-\beta}{\pi(1+\beta)}.
$
\end{proof}

Our margin-based analysis from Section~\ref{sec:margin} relies on using hinge-loss minimization in the band at every round to efficiently find a halfspace $\vec w_k$ that is a weak learner for $D_{k}$, i.e., $\error_{D_k}(\vec w_k)$ is at most a small constant, as demonstrated in Lemma~\ref{lem:active:error-clean-in-band}.
Motivated by this more lenient goal of finding a weak learner, one might ask whether  \textsf{Average}, as an efficient algorithm for finding low error halfspaces, can be incorporated with the margin-based technique in the same way as hinge loss minimization? We argue that the margin-based technique  is inherently incompatible with \textsf{Average}.

The Margin-based technique maintains two key properties at every step: First,
the angle between $\vec w_k$ and $\vec w_{k-1}$ and the angle between $\vec w_{k-1}$and $w^*$ are small, and as a result $\theta(\vec w^*,\vec  w_k)$ is small. Second, $\vec w_k$ is a weak learner with $\error_{D_{k-1}}(\vec w_k)$ at most a small constant.
In our work,  hinge loss minimization in the band guarantees both of these properties simultaneously by limiting its search to the halfspaces that are close in angle to $\vec w_{k-1}$ and limiting its distribution to $D_{\vec w_{k-1}, b_{k-1}}$.
However, in the case of \textsf{Average} as we concentrate in the band $D_{\vec w_{k-1}, b_{k-1}}$ we bias the distributions towards its orthogonal component with respect to $\vec w_{k-1}$. Hence, an upper bound on $\theta(\vec w^*, \vec w_{k-1})$ only serves to assure that most of the data is orthogonal to $\vec w^*$ as well. Therefore, informally speaking, we lose the signal that otherwise could direct us in the direction of $\vec w^*$. More formally, consider the construction from Theorem~\ref{thm:active:oneshot_average} such that $\vec w_{k-1} = \vec w^* = (1, 0, \dots, 0)$. In distribution $D_{\vec w_{k-1}, b_{k-1}}$, the component of $\vec w_k$ that is parallel to $\vec w_{k-1}$ scales down by the width of the band, $b_{k-1}$. However, as most of the probability stays in a band passing through the origin in any log-concave (including Gaussian and uniform) distribution, the orthogonal component of $\vec w_k$ remains almost unchanged. Therefore, $\theta(\vec w_k, \vec w^*) = \theta(\vec w_k, \vec w_{k-1})
\in \Omega(\frac{1-\beta}{b_{k-1}(1+\beta)}) \geq \left( \frac{(1-\beta) \sqrt d}{(1+\beta) \alpha_{k-1}} \right)$.

\section{Hinge Loss Minimization Does Not Work}
\label{hinge}
Hinge loss minimization is a widely used technique in Machine Learning.
In this section, we show that, perhaps surprisingly,  hinge loss minimization does not lead to arbitrarily small excess error even under very small noise condition, that is it is not consistent.
(Note that in our setting of Massart noise, consistency is the same as achieving arbitrarily small excess error, since the Bayes optimal classifier is a member of the class of halfspaces).

It has been shown earlier that hinge loss minimization can lead to classifiers of large $0/1$-loss \citep{Ben-DavidLSS12}.
However, the lower bounds in that paper  employ distributions with \emph{significant mass} on \emph{discrete points} with \emph{flipped label} (which is not possible under Massart noise) at a very \emph{large distance} from the optimal classifier.
Thus, that result makes strong use of the hinge loss's sensitivity to errors at large distance.
Here, we show that hinge loss minimization is bound to fail under much more benign conditions.
More concretely, we show that for every parameter $\tau$, and arbitrarily small bound on the probability of flipping a label, $\eta = \frac{1-\beta}{2}$,  hinge loss minimization is not consistent even on distributions with a uniform marginal over the unit ball in $\R^2$, with the Bayes optimal classifier being a halfspace and the noise satisfying the Massart noise condition with bound $\eta$. That is, there exists a constant $\epsilon \geq 0$ and a sample size $m(\epsilon)$ such that hinge loss minimization  returns a classifier of excess error at least $\epsilon$ with high probability over sample size of at least $m(\epsilon)$.

Hinge loss minimization
does approximate the optimal hinge loss.
We show that this does not translate into an agnostic learning guarantee for halfspaces with respect to the $0/1$-loss even under very small noise conditions.
Let $\P_\beta$ be the class of distributions $\tilde D$ with uniform marginal over the unit ball $B_1 \subseteq \R^2$, the Bayes classifier being a halfspace $\vec w$, and satisfying the Massart noise condition with parameter $\beta$.
Our lower bound for hinge loss minimization is stated as follows.

\begin{theorem}\label{thm:hinge}
 For every hinge-loss parameter $\tau \geq 0$
 and every Massart noise parameter $0 \leq \beta < 1$, there exists a distribution $\tilde D_{\tau,\beta}\in \P_\beta$
 (that is, a distribution over $B_1 \times \{-1, 1\}$  with uniform marginal over $B_1\subseteq \R^2$ satisfying the $\beta$-Massart condition) such that $\tau$-hinge loss minimization is not consistent on $\tilde D_{\tau,\beta}$ with respect to the class of halfspaces.
That is, there exists an $\epsilon \geq 0$ and a sample size $m(\epsilon)$ such that hinge loss minimization will output a classifier of excess error larger $\epsilon$ (with high probability over samples of size at least $m(\epsilon)$).
\end{theorem}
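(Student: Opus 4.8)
The plan is to exhibit, for each fixed $\tau$ and $\beta$, an explicit extremal Massart adversary on the unit disk $B_1\subseteq\R^2$ whose population hinge minimizer points in a direction a constant angle away from $\vec w^*$; the excess-error bound then follows from Equation~\ref{eq:angle-excess}, and the empirical statement from uniform convergence of the hinge loss. Without loss of generality take $\vec w^*=(1,0)$, so the clean label of $\vec x$ is $\sign(x_1)$. I would use the extremal adversary that flips labels at the maximal rate $\eta=\frac{1-\beta}{2}$ on a region $R\subseteq B_1$ chosen to break the left--right symmetry of the disk, and leaves all other labels clean. Writing $\ell$ for the $\tau$-hinge loss and $\vec w_\phi$ for the unit vector at angle $\phi$ from $\vec w^*$, the population hinge loss decomposes as
\[
L(\phi)=\E_{\vec x}\,\ell(\vec w_\phi,\vec x,\sign(x_1)) \;+\; \eta\,\E_{\vec x}\Big[\mathbf{1}_{\vec x\in R}\big(\ell(\vec w_\phi,\vec x,-\sign(x_1))-\ell(\vec w_\phi,\vec x,\sign(x_1))\big)\Big].
\]
The first (clean) term is symmetric in $\phi$ and minimized at $\phi=0$, while the second (noise) term is the only source of asymmetry; the whole point is to choose $R$ so that this term tilts the minimizer off $\vec w^*$.

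The key step is to show $L'(0)\neq 0$, which forces the population minimizer to sit at some angle $\phi^\star\neq 0$ and hence, by Equation~\ref{eq:angle-excess}, to have $0/1$ excess error at least $\beta|\phi^\star|/\pi=:\epsilon>0$. Here it is natural to split on $\tau$. For $\tau>1$ the margin $|\vec w_\phi\cdot\vec x|\le 1<\tau$ never reaches the kink, so $\ell$ is affine and $L(\phi)=1-\frac1\tau\,\vec w_\phi\cdot\E_{\tilde D}[\vec x y]$; minimizing over directions returns exactly the \textsf{Average} direction $\E_{\tilde D}[\vec x y]$, so the two-opposite-quadrants noise region from the proof of Theorem~\ref{thm:active:oneshot_average} already yields a constant-angle deviation and that case is immediate. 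The genuinely new case is $\tau\le 1$, where the truncation at the kink is active over a strip of the disk; there I would compute $L'(0)$ directly, differentiating the two-dimensional integral of the truncated hinge over $B_1$ as $\vec w_\phi$ rotates, working in polar coordinates and using that rotating $\vec w_\phi$ only moves the non-saturated strip $\{\,|\vec w_\phi\cdot\vec x|\le\tau\,\}$.

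I expect the $\tau\le 1$ computation to be the main obstacle: one must show that $L'(0)$ is bounded away from $0$, i.e.\ that the asymmetric noise term's derivative has a definite sign and does not cancel against the clean term's curvature for any admissible $\tau$. This needs a careful choice of $R$ so that the flipped points contribute to the hinge derivative with a fixed sign, together with a clean evaluation of the resulting integrals. A useful sanity check is that as $\tau\to 0$ the hinge approaches the $0/1$ loss and the pull should vanish, so both $R$ and the constant $\epsilon$ must be allowed to depend on $\tau$, which is consistent with the notation $\tilde D_{\tau,\beta}$ in the statement. A secondary point is the scale of the minimizer: since the unconstrained hinge is not scale-invariant, I would either restrict to unit directions (arguing the optimal scale is finite and does not change the optimal direction) or carry the norm as an extra variable and show the optimizing direction is still $\phi^\star\neq 0$.

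Finally, to pass from the population minimizer to the empirical one I would invoke standard uniform convergence for the hinge loss of halfspaces over the bounded domain $B_1$ (finite VC/Rademacher complexity, loss bounded by $1+1/\tau$), exactly in the spirit of Equations~\ref{eq:generalization-dirty}--\ref{eq:generalization-clean}. This gives a sample size $m(\epsilon)=\mathrm{poly}(1/\epsilon,1/\tau)$ such that, with high probability, the empirical hinge minimizer lies within angle $|\phi^\star|/2$ of the population minimizer and therefore still has excess error $\Omega(\epsilon)$, establishing the claimed inconsistency.
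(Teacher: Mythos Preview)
Your overall strategy---construct a Massart adversary so that the population $\tau$-hinge minimizer is bounded away in angle from $\vec w^*$, then invoke Equation~\ref{eq:angle-excess} and uniform convergence---is exactly the skeleton the paper uses. Two points, however, are treated too lightly and constitute real gaps.

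\textbf{The scale of the minimizer is the crux, not a ``secondary point''.} Hinge loss minimization is over all of $B_1$, so showing $L'(0)\neq 0$ among \emph{unit} vectors only rules out $\vec w^*$ itself, not $\lambda\vec w^*$ for $0<\lambda<1$. Your first proposed fix (``the optimal scale does not change the optimal direction'') is false in general for hinge loss precisely because of the kink; the optimal direction genuinely can vary with $\lambda$. The paper's device here is a one-line scaling identity: $\ell_\tau(\lambda\vec w,\vec x,y)=\ell_{\tau/\lambda}(\vec w,\vec x,y)$, so $\hL_{\tau_0}(\lambda\vec w)<\hL_{\tau_0}(\lambda\vec w^*)$ iff $\hL_{\tau_0/\lambda}(\vec w)<\hL_{\tau_0/\lambda}(\vec w^*)$. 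Consequently, to rule out \emph{every} $\lambda\vec w^*$ one must establish the unit-vector inequality \emph{simultaneously for every $\tau\ge\tau_0$} with a \emph{single} fixed noise region $R$. This is why the paper organizes the whole proof around a construction that is shown to work uniformly in $\tau\ge\tau_0$, and then separately checks that $\vec 0$ is never optimal. Your plan of using possibly different $R$'s for $\tau>1$ and $\tau\le 1$ breaks exactly here: when $\tau_0\le 1$, the scaling reduction forces you through all larger $\tau$ as well, so the construction cannot switch.

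\textbf{The paper's construction and comparison are different from yours.} Rather than quadrants and a derivative, the paper fixes a competitor $\vec w$ at angle $\alpha$ from $\vec w^*$, partitions $B_1$ into the disagreement wedge $A$ and the two halves $B,D$ of the agreement region (with $B$ the half closer to the hyperplane of $\vec w$), and puts noise $\eta$ on $A\cup B$ while leaving $D$ clean. It then computes $\hL_\tau(\vec w)-\hL_\tau(\vec w^*)$ exactly and shows it is negative whenever $\eta$ exceeds an explicit threshold $\eta(\alpha,\tau)$; crucially, $\eta(\alpha,\tau)\to 0$ as $\alpha\to 0$ and is monotone in $\tau$ on $(0,1]$ and constant for $\tau\ge 1$, so one can choose $\alpha$ (depending on $\tau_0$ and $\eta_0$) to beat all $\tau\ge\tau_0$ at once. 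Your $\tau>1$ reduction to \textsf{Average} is a clean shortcut the paper does not spell out, and your derivative viewpoint $L'(0)\neq 0$ is a legitimate alternative to the paper's direct comparison---but to make it a proof you must (i) commit to a single $R$, (ii) verify $L'_{\tau}(0)\neq 0$ for \emph{all} $\tau\ge\tau_0$ under that $R$, and (iii) combine with the scaling identity to exclude every $\lambda\vec w^*$. Without (ii)--(iii), the argument does not close.
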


\paragraph{Proof idea}
To prove the above result, we define a subclass of $\P_{\alpha,\eta} \subseteq \P_\beta$ consisting of well structured distributions.
We then show that for every hinge parameter $\tau$ and every bound on the noise $\eta$, there is a distribution $\tilde D\in \cal P_{\alpha, \eta}$ on which $\tau$-hinge loss minimization is not consistent.

\begin{wrapfigure}[9]{r}{0.2\textwidth}
\label{f:distribution_alpha}
\vspace{-0.8cm}
\centering
\includegraphics[width =.2\textwidth]{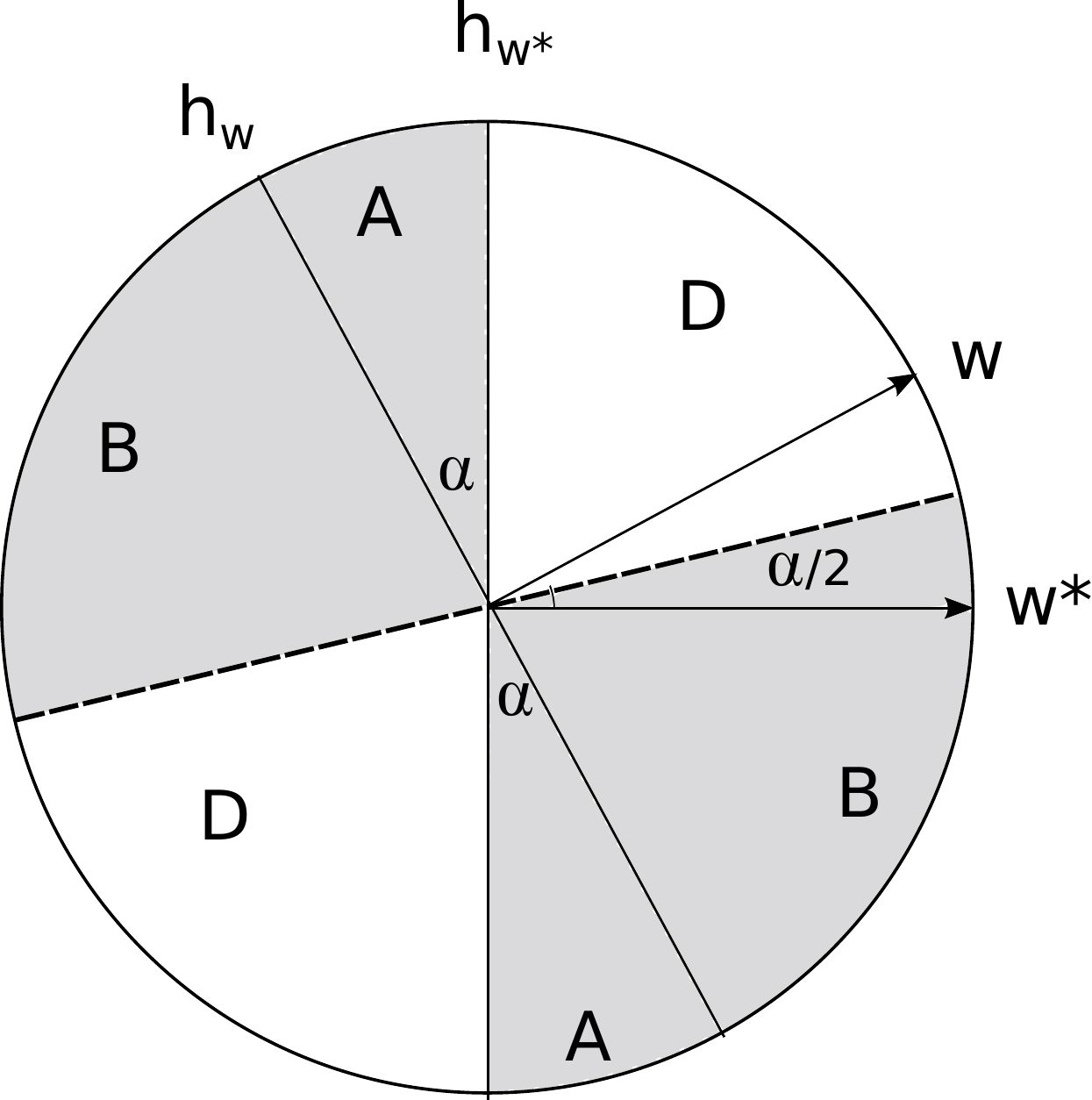}
\caption{$P_{\alpha,\eta}$}
\end{wrapfigure}
In the remainder of this section, we use the notation $h_{\vec w}$ for the classifier associated with a vector $\vec w\in B_1$, that is $h_{\vec w}(x) = \sign(\vec w \cdot x)$, since for our geometric construction it is convenient to differentiate between the two.
We define a family $\P_{\alpha,\eta} \subseteq \P_\beta$ of distributions $\tilde D_{\alpha,\eta}$, indexed by an angle $\alpha$ and a noise parameter $\eta$ as follows.
Let the Bayes optimal classifier be linear $h^* = h_{\vec w^*}$ for a unit vector $\vec w^*$.
Let $h_{\vec w}$ be the classifier that is defined by the unit vector ${\vec w}$ at angle $\alpha$ from ${\vec w}^*$.
We partition the unit ball into areas $A$, $B$ and $D$ as in the Figure \ref{f:distribution_alpha}.
That is $A$ consists of the two wedges of disagreement between $h_{\vec w}$ and $h_{{\vec w}^*}$ and the wedge where the two classifiers agree is divided into $B$ (points that are closer to $h_{\vec w}$ than to $h_{{\vec w}^*}$) and $D$ (points that are closer to $h_{{\vec w}^*}$ than to $h_{\vec w}$).
We now flip the labels of all points in $A$ and $B$ with probability $\eta = \frac{1-\beta}{2}$ and leave the labels deterministic according to $h_{\vec w^*}$ in the area $D$.

More formally,
points at angle between $\alpha/2$ and $\pi/2$ and  points at angle between $\pi + \alpha/2$ and $-\pi/2$ from $\vec w^*$ are labeled per $h_{{\vec w}^*}(\vec x)$ with conditional label probability $1$.
All other points are labeled $-h_{{\vec w}^*}(\vec x)$ with probability $\eta$ and $h_{{\vec w}^*}(\vec x)$ with probability $(1-\eta)$. Clearly, this distribution satisfies Massart noise conditions in Equation~\ref{eq:active:massart} with parameter $\beta$.

The goal of the above construction is to design distributions where vectors along the direction of $\vec w$ have smaller hinge loss of those along the direction of $\vec w^*$.
Observe that the noise in the are $A$ will tend to ``even out'' the difference in hinge loss between $\vec w$ and $\vec w^*$ (since are $A$ is symmetric with respect to these two directions).
The noise in area $B$ however will ``help $\vec w$'': Since all points in area $B$ are closer to the hyperplane defined by $\vec w$ than to the one defined by $\vec w^*$, vector $\vec w^*$ will pay more in hinge loss for the noise in this area.
In the corresponding area $D$ of points that are closer to the hyperplane defined by $\vec w^*$ than to the one defined by $\vec w$ we do not add noise, so the cost for both $w$ and $w^*$ in this area is small.

We show that for every $\alpha$, from a certain noise level $\eta$ on, $w^*$(or any other vector in its direction) is not the expected hinge minimizer on $\tilde D_{\alpha,\eta}$.
We then argue that thereby hinge loss minimization will not approximate $\vec w^*$ arbitrarily close in angle and can therefore not achieve arbitrarily small excess $0/1$-error.
Overall, we show that for every (arbitrarily small) bound on the noise $\eta_0$ and hinge parameter $\tau_0$, we can choose an angle $\alpha$ such that $\tau_0$-hinge loss minimization is not consistent for distribution $\tilde D_{\alpha,\eta_0}$.
The details of the proof can be found in the Appendix, Section \ref{a:hinge}.
\section{Conclusions}
Our work is the first to provide a computationally efficient algorithm under the Massart noise model, a distributional assumption that has been identified in statistical learning to yield fast (statistical) rates of convergence. While both computational and statistical efficiency is crucial in machine learning applications, computational and statistical complexity have been studied under disparate sets of assumptions and models. We view our results on the computational complexity of learning under Massart noise also as a step towards bringing these two lines of research closer together. We hope that this will spur more work identifying situations that lead to both computational and statistical efficiency to ultimately shed light on the underlying connections and dependencies of these two important aspects of automated learning.

\paragraph{Acknowledgments} This work was supported in part by NSF grants CCF-0953192, CCF-1451177, CCF-
1422910, a Sloan Research Fellowshp,  a Microsoft
Research Faculty Fellowship, and a Google Research Award.

\bibliography{one}

\begin{thebibliography}{10}

\bibitem{ABSS97}
Sanjeev Arora, L{\'{a}}szl{\'{o}} Babai, Jacques Stern, and Z.~Sweedyk.
\newblock The hardness of approximate optima in lattices, codes, and systems of
  linear equations.
\newblock In {\em Proceedings of the 34th IEEE Annual Symposium on Foundations
  of Computer Science {(FOCS)}}, 1993.

\bibitem{awasthi2014power}
Pranjal Awasthi, Maria~Florina Balcan, and Philip~M. Long.
\newblock The power of localization for efficiently learning linear separators
  with noise.
\newblock In {\em Proceedings of the 46th Annual ACM Symposium on Theory of
  Computing {(STOC)}}, 2014.

\bibitem{Balcan06}
Maria{-}Florina Balcan, Alina Beygelzimer, and John Langford.
\newblock Agnostic active learning.
\newblock In {\em Proceedings of the 23rd International Conference on Machine
  Learning {(ICML)}}, 2006.

\bibitem{balcan2007margin}
Maria{-}Florina Balcan, Andrei~Z. Broder, and Tong Zhang.
\newblock Margin based active learning.
\newblock In {\em Proceedings of the 20th Annual Conference on Learning Theory
  {(COLT)}}, 2007.

\bibitem{vitalynina13}
Maria{-}Florina Balcan and Vitaly Feldman.
\newblock Statistical active learning algorithms.
\newblock In {\em Advances in Neural Information Processing Systems {(NIPS)}},
  2013.

\bibitem{Ben-DavidLSS12}
Shai Ben{-}David, David Loker, Nathan Srebro, and Karthik Sridharan.
\newblock Minimizing the misclassification error rate using a surrogate convex
  loss.
\newblock In {\em Proceedings of the 29th International Conference on Machine
  Learning {(ICML)}}, 2012.

\bibitem{blum1998polynomial}
Avrim Blum, Alan Frieze, Ravi Kannan, and Santosh Vempala.
\newblock A polynomial-time algorithm for learning noisy linear threshold
  functions.
\newblock {\em Algorithmica}, 22(1-2):35--52, 1998.

\bibitem{borgwardt1987simplex}
Karl-Heinz Borgwardt.
\newblock {\em The simplex method}, volume~1 of {\em Algorithms and
  Combinatorics: Study and Research Texts}.
\newblock Springer-Verlag, Berlin, 1987.

\bibitem{bbl05}
Olivier Bousquet, St{\'e}phane Boucheron, and Gabor Lugosi.
\newblock Theory of classification: a survey of recent advances.
\newblock {\em ESAIM: Probability and Statistics}, 9:323--375, 2005.

\bibitem{CN07}
Rui~M. Castro and Robert~D. Nowak.
\newblock Minimax bounds for active learning.
\newblock In {\em Proceedings of the 20th Annual Conference on Learning Theory,
  {(COLT)}}, 2007.

\bibitem{Cristianini00}
Nello Cristianini and John Shawe{-}Taylor.
\newblock {\em An Introduction to Support Vector Machines and Other
  Kernel-based Learning Methods}.
\newblock Cambridge University Press, 2000.

\bibitem{amit2014stoc}
Amit Daniely, Nati Linial, and Shai Shalev{-}Shwartz.
\newblock From average case complexity to improper learning complexity.
\newblock In {\em Proceedings of the 46th Annual ACM Symposium on Theory of
  Computing {(STOC)}}, 2014.

\bibitem{sanjoy-coarse}
Sanjoy Dasgupta.
\newblock Coarse sample complexity bounds for active learning.
\newblock In {\em Advances in Neural Information Processing Systems {(NIPS)}},
  2005.

\bibitem{sanjoy11-encyc}
Sanjoy Dasgupta.
\newblock Active learning.
\newblock {\em Encyclopedia of Machine Learning}, 2011.

\bibitem{dhsm}
Sanjoy Dasgupta, Daniel Hsu, and Claire Monteleoni.
\newblock A general agnostic active learning algorithm.
\newblock In {\em Advances in Neural Information Processing Systems {(NIPS)}},
  2007.

\bibitem{dgs12}
Ofer Dekel, Claudio Gentile, and Karthik Sridharan.
\newblock Selective sampling and active learning from single and multiple
  teachers.
\newblock {\em Journal of Machine Learning Research}, 13:2655--2697, 2012.

\bibitem{QBC}
Yoav Freund, H.~Sebastian Seung, Eli Shamir, and Naftali Tishby.
\newblock Selective sampling using the query by committee algorithm.
\newblock {\em Machine Learning}, 28(2-3):133--168, 1997.

\bibitem{GR06}
Venkatesan Guruswami and Prasad Raghavendra.
\newblock Hardness of learning halfspaces with noise.
\newblock In {\em Proceedings of the 47th Annual {IEEE} Symposium on
  Foundations of Computer Science {(FOCS)}}, 2006.

\bibitem{Hanneke07}
Steve Hanneke.
\newblock A bound on the label complexity of agnostic active learning.
\newblock In {\em Proceedings of the 24rd International Conference on Machine
  Learning {(ICML)}}, 2007.

\bibitem{hanneke-survey}
Steve Hanneke.
\newblock Theory of disagreement-based active learning.
\newblock {\em Foundations and Trends in Machine Learning}, 7(2-3):131--309,
  2014.

\bibitem{HY14}
Steve Hanneke and Liu Yang.
\newblock Surrogate losses in passive and active learning.
\newblock {\em CoRR}, abs/1207.3772, 2014.

\bibitem{kalai2008agnostically}
Adam~Tauman Kalai, Adam~R. Klivans, Yishay Mansour, and Rocco~A. Servedio.
\newblock Agnostically learning halfspaces.
\newblock {\em SIAM Journal on Computing}, 37(6):1777--1805, 2008.

\bibitem{kalai2008agnostic}
Adam~Tauman Kalai, Yishay Mansour, and Elad Verbin.
\newblock On agnostic boosting and parity learning.
\newblock In {\em Proceedings of the 40th Annual ACM Symposium on Theory of
  Computing {(STOC)}}, 2008.

\bibitem{Kearns-li:93}
Michael~J. Kearns and Ming Li.
\newblock Learning in the presence of malicious errors (extended abstract).
\newblock In {\em Proceedings of the 20th Annual ACM Symposium on Theory of
  Computing {(STOC)}}, 1988.

\bibitem{KearnsSS94}
Michael~J. Kearns, Robert~E. Schapire, and Linda Sellie.
\newblock Toward efficient agnostic learning.
\newblock In {\em Proceedings of the 5th Annual Conference on Computational
  Learning Theory {(COLT)}}, 1992.

\bibitem{klivans2014embedding}
Adam~R. Klivans and Pravesh Kothari.
\newblock Embedding hard learning problems into gaussian space.
\newblock In {\em Approximation, Randomization, and Combinatorial Optimization.
  Algorithms and Techniques, {(APPROX/RANDOM)}}, 2014.

\bibitem{KLS09}
Adam~R. Klivans, Philip~M. Long, and Rocco~A. Servedio.
\newblock Learning halfspaces with malicious noise.
\newblock {\em Journal of Machine Learning Research}, 10:2715--2740, 2009.

\bibitem{massart2006}
Pascal Massart and Élodie Nédélec.
\newblock Risk bounds for statistical learning.
\newblock {\em The Annals of Statistics}, 34(5):2326--2366, 10 2006.

\bibitem{rivest1994formal}
Ronald~L. Rivest and Robert~H. Sloan.
\newblock A formal model of hierarchical concept learning.
\newblock {\em Information and Computation}, 114(1):88--114, 1994.

\bibitem{servedio2001efficient}
Rocco~A. Servedio.
\newblock {\em Efficient algorithms in computational learning theory}.
\newblock Harvard University, 2001.

\bibitem{sloan1996pac}
Robert~H. Sloan.
\newblock Pac learning, noise, and geometry.
\newblock In {\em Learning and Geometry: Computational Approaches}, pages
  21--41. Springer, 1996.

\end{thebibliography}
\bibliographystyle{plain}

\appendix
\section{Probability Lemmas For The Uniform Distribution}
The following probability lemmas are used throughout this work.
Variation of these lemmas are presented in previous work in terms of their asymptotic behavior \citep{awasthi2014power, balcan2007margin, kalai2008agnostically}. Here, we  focus on finding bounds that are tight even when the constants are concerned.
Indeed, the improved constants in these bounds  are essential to tolerating Massart noise with $\beta > 1- 3.6 \times 10^{-6}$.

Throughout this section, let $D$ be the uniform distribution  over a $d$-dimensional ball. Let $f(\cdot)$ indicate the p.d.f. of $D$. 
For any $d$, let $V_d$ be the volume of a $d$-dimensional unit ball.  Ratios between volumes of the unit ball in different dimensions are commonly used to find the probability mass of different regions under the uniform distribution. Note that for any $d$
\[ \frac{V_{d-2}}{V_d} = \frac{d}{2\pi }.
\]
The following bound due to \cite{borgwardt1987simplex} proves useful in our analysis.
\[ \sqrt{\frac{d}{2\pi}} \leq \frac{V_{d-1}}{V_d} \leq \sqrt{\frac{d+1}{2\pi}}
\]
The next lemma provides an upper and lower bound for the probability mass of a band in uniform distribution. 
\begin{lemma}\label{lem:active:prob-in-band}
Let $\vec u$ be any unit vector in  $\R^d$.
For all $a, b\in[-\frac{C}{\sqrt d}, \frac{C}{\sqrt d} ]$, such that  $C<d/2$, we have
\begin{align*}
|b-a|  2^{-C}   \frac{V_{d-1}}{V_d} \leq \Pr_{\vec x\sim D} &[\vec u\cdot \vec x \in [a,b]]\leq |b-a|\frac{V_{d-1}}{V_d}.
\end{align*}
\end{lemma}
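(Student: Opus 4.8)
The plan is to reduce the $d$-dimensional band probability to a one-dimensional integral over the marginal of a single coordinate, and then bound the resulting integrand from above and below. Since $D$ is rotationally symmetric and $\vec u$ is a unit vector, I would first assume without loss of generality that $\vec u = e_1$, so that $\vec u \cdot \vec x = x_1$. By Cavalieri's principle the slice of the unit ball at height $x_1 = t$ is a $(d-1)$-dimensional ball of radius $\sqrt{1-t^2}$, of volume $V_{d-1}(1-t^2)^{(d-1)/2}$; dividing by the total volume $V_d$ gives the marginal density of the coordinate $x_1$,
\[
  g(t) = \frac{V_{d-1}}{V_d}\,(1-t^2)^{(d-1)/2}, \qquad t \in [-1,1].
\]
Hence $\Pr_{\vec x \sim D}[\vec u \cdot \vec x \in [a,b]] = \frac{V_{d-1}}{V_d}\int_a^b (1-t^2)^{(d-1)/2}\,dt$, and both halves of the lemma reduce to estimating this single integral. (The hypothesis $C < d/2$ is only needed to keep $[a,b]$ inside $[-1,1]$ so that every slice is nondegenerate.)

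The upper bound is then immediate: $(1-t^2)^{(d-1)/2} \le 1$ on $[-1,1]$, so the integral is at most $|b-a|$, giving $\Pr[\cdot] \le |b-a|\,V_{d-1}/V_d$. For the lower bound I would replace the integrand by its minimum over the interval. Since $[a,b] \subseteq [-C/\sqrt d,\,C/\sqrt d]$, every $t$ in range satisfies $t^2 \le C^2/d$, so $(1-t^2)^{(d-1)/2} \ge (1-C^2/d)^{(d-1)/2}$ and the integral is at least $|b-a|\,(1-C^2/d)^{(d-1)/2}$. Everything then comes down to the elementary inequality $(1-C^2/d)^{(d-1)/2} \ge 2^{-C}$.

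I expect this last inequality to be the crux, and precisely the place where the sharp-constant bookkeeping stressed at the start of the appendix matters. My approach would be to take logarithms and invoke the convexity estimate $1-x \ge 4^{-x}$, valid on $[0,1/2]$ (the two sides agree at the endpoints, and the right-hand side is convex while the left is affine), applied with $x = t^2$. This gives $(1-t^2)^{(d-1)/2} \ge 2^{-t^2(d-1)}$, and substituting $t^2 \le C^2/d$ bounds the exponent by $\tfrac{C^2(d-1)}{d} \le C^2$. In the regime actually used by the algorithm, where $C = c\,\alpha_k$ is a small constant well below $1$ so that $C^2 \le C$, this is at least $2^{-C}$, exactly the claimed coefficient; assembling the density identity with the two integrand bounds then yields both halves of the lemma. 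The delicate point, and the reason the statement is phrased with an explicit constant rather than an asymptotic $\Theta$, is that this exponent estimate must be controlled tightly enough for the surviving factor to be $2^{-C}$ over the $C$ that the algorithm employs, so one must track the $\tfrac{d-1}{d}$ factor rather than discard it.
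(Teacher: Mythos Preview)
Your approach is essentially the paper's: express the probability as $\frac{V_{d-1}}{V_d}\int_a^b (1-t^2)^{(d-1)/2}\,dt$, bound the integrand above by $1$, and for the lower bound invoke the convexity estimate $1-x \ge 4^{-x}$ on $[0,\tfrac12]$. The only real difference is where that inequality is applied. The paper plugs in $x = C/d$, so that the hypothesis $C<d/2$ is precisely the condition $x\le\tfrac12$, and then $4^{-\frac{C}{d}\cdot\frac{d-1}{2}} = 2^{-C(d-1)/d}\ge 2^{-C}$ lands on the stated constant directly. You instead plug in $x=t^2\le C^2/d$ and arrive at $2^{-C^2}$, after which you need the extra step $C^2\le C$ (i.e.\ $C\le 1$) to reach $2^{-C}$. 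Your intermediate bound $(1-C^2/d)^{(d-1)/2}$ on the minimum of the integrand is in fact the correct one---the paper writes $(1-C/d)^{(d-1)/2}$, which appears to be a slip---so the $C\le 1$ restriction you flag is genuine rather than an artifact of your method; as you observe, the algorithm only ever uses $C\approx 0.285$, well within this regime. One small correction: the hypothesis $C<d/2$ is \emph{not} what forces $[a,b]\subseteq[-1,1]$ (that would be $C\le\sqrt d$); in the paper it is exactly the constraint needed to apply $1-x\ge 4^{-x}$ at $x=C/d$.
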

\begin{proof}
We have
\[
\Pr_{\vec x\sim D} [\vec u\cdot \vec x \in [a,b]] = \frac{V_{d-1}}{V_d} \int_a^b (1-z^2 )^{(d-1)/2} ~dz.
\]
For the upper bound, we note that the integrant is at most $1$, so $\Pr_{\vec x\sim D} [\vec u\cdot \vec x \in [a,b]]\leq \frac{V_{d-1}}{V_d} |b-a|$ . For the  lower bound, note that since $a, b\in[-\frac{C}{\sqrt d}, \frac{C}{\sqrt d} ]$, the integrant is at least $(1- \frac{C}{d})^{(d-1)/2}$.
We know that for any $x\in [0, 0.5]$, $1-x > 4^{-x}$. So, assuming that $d>2 C$, 
$(1- \frac{C}{d})^{(d-1)/2} \geq 4^{-\frac{C}{d} (d-1)/2} \geq 2^{-C}$
$\Pr_{\vec x\sim D} [\vec u\cdot \vec x \in [a,b]]  \geq |b-a| 2^{-C}  \frac{V_{d-1}}{V_d}$.
\end{proof}
\begin{lemma}\label{lem:disagree-outside-band}
Let $\vec u$ and $\vec v$ be two unit vectors in $\R^d$ and let $ \alpha = \theta(\vec u, \vec v) $. Then,
\[ \Pr_{x\sim D} [ \sign(\vec u\cdot\vec  x) \neq \sign(\vec w\cdot \vec x) \text{ and } |\vec u\cdot\vec  x|>\frac{c ~ \alpha}{\sqrt d} ] \leq  \frac{\alpha}{\pi} e^{-\frac{c^2(d-2)}{2d}}
\]
\end{lemma}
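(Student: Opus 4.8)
The plan is to reduce the $d$-dimensional probability to a two-dimensional integral over the plane $\Pi = \mathrm{span}(\vec u, \vec v)$ (I read the $\vec w$ in the statement as the second vector $\vec v$), since the event depends on $\vec x$ only through its projection $p = (\vec u \cdot \vec x,\, \vec u^\perp \cdot \vec x)$ onto $\Pi$, where $\vec u^\perp$ is the unit vector in $\Pi$ orthogonal to $\vec u$. First I would record the marginal law of this projection: integrating out the $d-2$ orthogonal coordinates, the density of $p$ on the unit disk is $\frac{V_{d-2}}{V_d}(1-\|p\|^2)^{(d-2)/2} = \frac{d}{2\pi}(1-\|p\|^2)^{(d-2)/2}$, using the volume ratio $V_{d-2}/V_d = d/(2\pi)$ recorded above. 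In polar coordinates $(r,\phi)$ on $\Pi$ this factors into an independent uniform angle $\phi$ and a radial density $d(1-r^2)^{(d-2)/2} r$.

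Next I would describe the event geometrically in these coordinates. Placing $\vec u$ at angle $0$, the disagreement set $\{\sign(\vec u\cdot\vec x)\ne\sign(\vec v\cdot\vec x)\}$ is the union of two opposite wedges, each of angular width $\alpha$, adjacent to the directions $\phi=\pm\pi/2$. Writing $\psi\in(0,\alpha)$ for the angular distance of $\phi$ from $\pi/2$ (resp.\ $-\pi/2$) inside a wedge, one has $|\vec u\cdot\vec x| = r\,|\cos\phi| = r\sin\psi$. Hence the band constraint $|\vec u\cdot\vec x| > c\alpha/\sqrt d =: t$ is exactly $r > t/\sin\psi$. This is the crucial structural point: inside the thin disagreement wedge, demanding a large margin $|\vec u\cdot\vec x|$ forces the radius $r$ to be bounded away from $0$, and the projected mass at large radius is exponentially small because of the concentration factor $(1-r^2)^{(d-2)/2}$.

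Then I would carry out the integration. Using the independence of $\phi$ and $r$ together with the elementary tail identity $\Pr[r > a] = \int_a^1 d(1-r^2)^{(d-2)/2} r\,dr = (1-a^2)^{d/2}$ (a direct substitution $s = 1-r^2$), and bounding $\sin\psi \le \sin\alpha$ uniformly over the wedge so that $t/\sin\psi \ge t/\sin\alpha$, each wedge contributes at most $\frac{\alpha}{2\pi}\,(1-(t/\sin\alpha)^2)^{d/2}$. Summing the two wedges gives $\frac{\alpha}{\pi}(1-(t/\sin\alpha)^2)^{d/2}$. Finally, applying $(1-y)^{d/2}\le e^{-dy/2}$ together with $t^2/\sin^2\alpha = \frac{c^2\alpha^2}{d\sin^2\alpha}\ge \frac{c^2}{d}$ (since $\alpha/\sin\alpha\ge 1$) yields $\frac{\alpha}{\pi}e^{-c^2/2}$, which is at most $\frac{\alpha}{\pi}e^{-c^2(d-2)/(2d)}$ as claimed (and is in fact slightly stronger). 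Two small caveats are dispatched along the way: the degenerate case $t/\sin\alpha \ge 1$, in which the event is empty and the bound is trivial, and the assumption $\alpha \le \pi/2$ (which holds throughout, since the angles $\alpha_k$ used by the algorithm are small), ensuring $\cos\phi$ keeps a fixed sign on each wedge.

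I do not expect a genuine obstacle here; the entire content lies in the first two steps — identifying the correct projected density $\frac{d}{2\pi}(1-\|p\|^2)^{(d-2)/2}$ on $\Pi$, and the reformulation $|\vec u\cdot\vec x| > t \Leftrightarrow r > t/\sin\psi$ inside the wedge. Once these are in place the remainder is a one-line radial integral followed by the standard inequality $(1-y)^{d/2}\le e^{-dy/2}$.
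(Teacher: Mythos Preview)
Your argument is correct and follows the same opening move as the paper: project onto the plane spanned by $\vec u$ and $\vec v$, record the projected density $\frac{d}{2\pi}(1-r^2)^{(d-2)/2}$, and observe that inside the disagreement wedge the margin constraint forces the radius to be bounded below by roughly $c/\sqrt d$.

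Where you diverge from the paper is in how you finish. The paper bounds the arc length of $E$ on the circle of radius $r$ by $r\alpha - c\alpha/\sqrt d$ (i.e., the full wedge arc minus a lower bound on the portion that falls inside the band), then integrates this against the radial density from $r=c/\sqrt d$ to $1$ via a change of variables and the estimate $(1-c^2r^2/d)^{(d-2)/2}\le e^{-r^2(d-2)c^2/(2d)}$. You instead enclose $E$ in the product event $\{\phi\in\text{wedge}\}\cap\{r>t/\sin\alpha\}$, exploit the independence of $\phi$ and $r$, and evaluate the radial tail in closed form as $(1-a^2)^{d/2}$. This is cleaner: it replaces the paper's integral manipulation by a single substitution, and it yields the slightly sharper constant $e^{-c^2/2}$ rather than $e^{-c^2(d-2)/(2d)}$. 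The paper's route has the minor advantage that its arc-length bound $r\alpha - t$ retains the band subtraction (so in principle could be pushed further), but as actually executed it gains nothing over your simpler containment.
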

\begin{proof}
Without the loss of generality, we can assume $\vec u = (1, 0, \dots, 0)$ and $\vec w = (\cos(\alpha), \sin(\alpha), 0 ,\allowbreak\dots, 0)$.
Consider the projection of $D$ on the first $2$ coordinates. Let $E$ be the event we are interested in. 
We first show that for any $\vec x = (x_1, x_2) \in E$, $\Vert \vec x \Vert_2 > c/\sqrt d$. 
Consider $x_1\geq 0$ (the other case is symmetric). If $\vec x\in E$, it must be that $\Vert \vec x \Vert_2 \sin(\alpha) \geq \frac{c \alpha}{\sqrt d}$. So, $\Vert \vec x \Vert_2  =  \frac{c ~\alpha}{\sin(\alpha) \sqrt d} \geq \frac{c}{\sqrt d}$.

Next, we consider a circle  of radius $\frac{c}{\sqrt d} < r <1$ around the center, indicated by $S(r)$. Let $A(r) = S(r) \cap E$ be the arc of such circle that is in $E$.
Then the length of such arc is the arc-length that falls in the disagreement region, i.e., $r\alpha$, minus the arc-length that falls in the band of width $\frac{c\alpha}{\sqrt d}$.
Note, that for every $\vec x\in A(r)$, $\Vert \vec x \Vert_2 = r$, so $f(\vec x) = \frac{V_{d-2}}{V_d} (1- \Vert \vec x \Vert^2)^{(d-2)/2} =  \frac{V_{d-2}}{V_d} (1-r^2)^{(d-2)/2}$.

\allowdisplaybreaks
\begin{align*}
\Pr_{x\sim D}  [ \sign(\vec u\cdot \vec x) \neq & \sign(\vec w\cdot \vec  x) \text{ and } |\vec  u\cdot \vec  x|>\frac{\alpha}{\sqrt d} ] = 2 \int_{\frac {c}{\sqrt d}}^1 (r \alpha - \frac{c\alpha}{\sqrt d}) f(r) ~dr \\
&= 2 \int_1^{\sqrt d / c} (\frac{rc}{\sqrt d}  \alpha - \frac{c\alpha}{\sqrt d}) f(\frac{cr}{\sqrt d}) \frac{c}{\sqrt d} ~dr \quad \text{ (change of variable $z = r \sqrt d/c$ )} \\
&= 2 \frac{V_{d-2}}{V_d}  \frac{c^2 \alpha}{d}    \int_1^{\sqrt d/c} (r-1)  (1 - \frac{c^2r^2}{d})^{(d-2)/2} ~dr \\
&=  \frac{c^2 \alpha}{\pi}    \int_1^{\sqrt d/c} (r-1)  e^{ - \frac{r^2(d-2)}{2d}} ~dr \\
 &\leq  \frac{c^2 \alpha}{\pi}    \int_1^{\sqrt d} \frac{(r-1)}{\frac{(d-2)c^2r}{d}} (-1)(\frac{-(d-2)c^2r}{d})  e^{-\frac{(d-2) c^2r^2}{2d}} ~dr \\
 &\leq  \frac{\alpha}{\pi}    \int_1^{\sqrt d/c} (-1)(\frac{-(d-2)c^2r}{d})  e^{-\frac{(d-2)c^2 r^2}{2d}} ~dr    \\
 &\leq \frac{\alpha}{\pi}  \Big[ -e^{-\frac{(d-2)r^2}{2d}} \Big]_{r =1}^{r = \sqrt d/c} \\
 &\leq \frac{\alpha}{\pi} (e^{-\frac{c^2(d-2)}{2d}} - e^{-(d-2)/2})\\
 & \leq  \frac{\alpha}{\pi} e^{-\frac{c^2(d-2)}{2d}}
\end{align*}
\end{proof}

\section{Proofs of Margin-based Lemmas} \label{app:margin_uniform}

\begin{proofof}{of Lemma~\ref{lem:active:L(w*)}}
Let $L(\vec w^*) = \E_{(\vec x, y)\sim D_k } \ell(\vec w^*,\vec x, y)$, 
$\tau = \tau_k$, and $b = b_{k-1}$.
First note that for our choice of $b\leq \cb \times \ao \frac{1}{\sqrt d}$, using Lemma~\ref{lem:active:prob-in-band} we have that 
\[ \Pr_{x\sim D}[|\vec w_{k-1}\cdot \vec x|< b] \geq 2 ~b \times  2^{-0.285329}.
\]
Note that $L(\vec w^*)$ is maximized when $\vec w^* = \vec w_{k-1}$. Then 
\begin{align*}
L(\vec w^*) \leq  \dfrac{ 2 \int_0^\tau (1 - \frac{a}{\tau}) f(a) ~da}{ \Pr_{x\sim D}[|\vec w_{k-1}\cdot \vec x|< b]   }
 \leq \dfrac{ \int_0^{\tau} (1 - \frac{a}{\tau}) (1- a^2)^{-(d-1)/2} ~da }{  b ~ 2^{-0.285329}}.
\end{align*}
For the numerator:
\begin{align*}
\int_0^{\tau}& (1 - \frac{a}{\tau})  (1- a^2)^{-(d-1)/2} ~da \leq \int_0^\tau (1 - \frac{a}{\tau}) e^{-a^2 (d-1)/2} ~da\\
 &\leq \frac 12 \int_{-\tau}^\tau  e^{-a^2 (d-1)/2} ~da - \frac{1}{\tau} \int_0^\tau a  e^{-a^2 (d-1)/2}   ~da \\
 &\leq \sqrt{\frac{ \pi}{2(d-1)}} ~ \erf \left(\tau \sqrt{\frac{d-1}{2}} \right) - \frac{1}{(d-1) \tau} (1 - e^{-(d-1) \tau^2/2}) \\
 &\leq \sqrt{\frac{ \pi}{2(d-1)}} \sqrt{1- e^{-\tau^2 (d-1)}}  -  \frac{1}{(d-1) \tau} \left(\frac{(d-1) \tau^2}{2} - \frac 12 (\frac{(d-1) \tau^2}{2})^2\right) \quad \text{ (By Taylor expansion)} \\
  &\leq \tau \sqrt{\frac{\pi}{2}} -  \frac{\tau}{2} + \frac 18 (d-1) \tau^3\\ 
 &\leq \tau (0.5462  + \frac 18 (d-1) \tau^2) \\
  &\leq 0.5463 \tau \qquad \text{(By $\frac 18 (d-1) \tau^2 < 2 \times 10^{-4}$)}
\end{align*}
Where the last inequality follows from the fact that for our choice of parameters $\tau \leq\allowbreak  \frac{ \sqrt{2.50306}   (3.6 \times 10^{-6})^{1/4}  b}{\sqrt d}<\allowbreak \frac{0.003}{\sqrt d}$, so  $\frac 18 (d-1) \tau^2 <  10^{-5}$.
Therefore, 
\[ L(\vec w^* ) \leq  0.5463 \times  2^{0.285329} \frac{\tau}{b} \leq  \Lw\frac{\tau}{b}.
\]
\end{proofof}

\begin{proofof}{of Lemma~\ref{lem:active:error-clean-in-band}}
Note that  the convex loss minimization procedure returns a vector $\vec v_k$ that is not necessarily normalized. 
To consider all vectors in $B(\vec w_{k-1}, \alpha_k)$, at step $k$, the optimization is done over all vectors  $\vec v$ (of any length) such that  $\Vert \vec w_{k-1}-\vec v \Vert < \alpha_k$.
For all $k$, $\alpha_k< 0.038709\pi$ (or $\ao$), so $\Vert \vec v_k \Vert_2 \geq 1 - \ao $, and as a result $\ell(\vec w_k, W) \leq  1.13844 ~\ell(\vec v_k, W)$.
We have,
\allowdisplaybreaks
\begin{align*}
&\error_{D_k} (\vec w_k) \leq \E_{(\vec x,y)\sim D_k} \ell(\vec w_k, \vec x, y) \\
   &\leq \E_{(\vec x,y)\sim\tilde D_k} \ell(\vec w_k, \vec x, y) + \left( 1.092  \sqrt 2 \sqrt{1-\beta} \frac{b_{k-1}}{\tau_k}  \right)  \qquad \text{(By Lemma~\ref{lem:active:diff-clean-dirty})}  \\
     &\leq \ell(\vec w_k, W) + 1.092  \sqrt 2 \sqrt{1-\beta} \frac{b_{k-1}}{\tau_k}  + \gen \qquad \text{(By Equation~\ref{eq:generalization-dirty})} \\
     &\leq 1.13844 ~\ell(\vec v_k, W) +   1.092  \sqrt 2 \sqrt{1-\beta} \frac{b_{k-1}}{\tau_k}  + \gen \qquad \text{(By $\| \vec v_k \|_2 \geq 1 - \ao $)}\\
     &\leq 1.13844 ~\ell(\vec w^*, W) +  1.092  \sqrt 2 \sqrt{1-\beta} \frac{b_{k-1}}{\tau_k}  + 2.14 \times \gen \quad \text{(By $\vec v_k$ minimizing the hinge-loss)}\\
    &\leq 1.13844 ~\E_{(\vec x,y)\sim\tilde D_k} \ell(\vec w^*, \vec x, y)   +  1.092  \sqrt 2 \sqrt{1-\beta} \frac{b_{k-1}}{\tau_k}  + 3.28 \times \gen \qquad \text{(By Equation~\ref{eq:generalization-dirty})} \\
    &\leq 1.13844 ~ \E_{(\vec x,y)\sim D_k} \ell(\vec w^*, \vec x, y)   +  2.13844 \left( 1.092  \sqrt 2 \sqrt{1-\beta} \frac{b_{k-1}}{\tau_k} \right)  + 3.28\times 10^{-6} \quad  \text{(By Lemma~\ref{lem:active:diff-clean-dirty})} \\
     &\leq  0.757941 \frac{\tau_k}{b_{k-1}}    + 3.303 \sqrt{1-\beta} \frac{b_{k-1}}{\tau_k} + 3.28 \times \gen \quad  \text{(By Lemma~\ref{lem:active:L(w*)})}
\end{align*}
\end{proofof}

\begin{lemma}\label{lem:active:hinge-generalization}
For any constant $c'$, there is $m_k \in O(d(d+ \log(k/d)))$ such that for a randomly drawn set $W$ of $m_k$ labeled samples from $\tilde D_k$,  with probability $1- \frac{\delta}{k + k^2}$, for any $\vec w\in B(\vec w_{k-1}, \alpha_k)$,
\begin{align*}
| \E_{(\vec x,y)\sim \tilde D_k} \left( \ell(\vec w,\vec x,y) - \ell(\vec w, W)  \right) |  &   \leq c',\\
| \E_{(\vec x,y)\sim D_k} \left( \ell(\vec w,\vec x,y) - \ell(\vec w, cleaned(W))   \right)  |  &   \leq c'.
\end{align*}
\end{lemma}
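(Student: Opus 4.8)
The plan is to establish both inequalities as a single uniform convergence (generalization) statement for the $\tau_k$-normalized hinge loss over the restricted hypothesis class $B(\vec w_{k-1},\alpha_k)$, and then read off the noisy and cleaned cases from the same bound. Write $\hL_k = \{(\vec x,y)\mapsto \ell_{\tau_k}(\vec w,\vec x,y):\vec w\in B(\vec w_{k-1},\alpha_k)\}$ for the induced loss class. The first step is to bound the \emph{range} of every $f\in\hL_k$ on points lying in the band $S_{\vec w_{k-1},b_{k-1}}$: writing $\vec w=\vec w_{k-1}+\vec u$ with $\|\vec u\|\le\alpha_k$ and using $\|\vec x\|\le 1$, we get $|\vec w\cdot\vec x|\le|\vec w_{k-1}\cdot\vec x|+|\vec u\cdot\vec x|\le b_{k-1}+\alpha_k$, so $0\le\ell_{\tau_k}(\vec w,\vec x,y)\le 1+(b_{k-1}+\alpha_k)/\tau_k=:M$. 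Since $b_{k-1}=c\alpha_k/\sqrt d$ (hence $\alpha_k=b_{k-1}\sqrt d/c$) and $\tau_k=\Theta(b_{k-1})$, this yields $M=\Theta(\sqrt d)$; the $d$-dependence of the final sample size comes precisely from this $\sqrt d$ range.

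The second step is to bound the statistical complexity of $\hL_k$. Viewing $y\vec x$ as the input, the maps $(\vec x,y)\mapsto y(\vec w\cdot\vec x)$ range over a $d$-dimensional linear function space, so this class has pseudo-dimension at most $d$; each loss is a fixed, monotone univariate transformation $t\mapsto\max\{0,1-t/\tau_k\}$ of that linear functional, and composing with a fixed univariate function does not increase the pseudo-dimension. Hence $\hL_k$ has pseudo-dimension $O(d)$. I would then invoke the standard uniform convergence theorem for a class of $[0,M]$-valued functions of pseudo-dimension $P$: for any accuracy $c'$ and confidence $1-\delta'$, one has $\Pr[\sup_{f\in\hL_k}|\E f-\hat\E f|>c']\le\delta'$ whenever $m_k=O\big(\tfrac{M^2}{c'^2}(P+\log\tfrac1{\delta'})\big)$ (up to logarithmic factors). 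Substituting $M^2=O(d)$, $P=O(d)$, $c'$ constant, and $\delta'=\frac{\delta}{k+k^2}$ gives $m_k=O(d(d+\log(k/\delta)))$, matching the claim.

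Finally, the cleaned inequality follows from the identical argument applied to the clean distribution $D_k$. The key observation is that Massart noise only flips labels and never alters the instance, so the $\vec x$-marginals of $\tilde D_k$ and $D_k$ coincide; relabeling $W$ by $\sign(\vec w^*\cdot\vec x)$ therefore produces $\mathrm{cleaned}(W)$, which is an i.i.d.\ sample from $D_k$. The same range and pseudo-dimension bounds apply verbatim (the class and $\tau_k$ are unchanged), so the second inequality holds with the same $m_k$, and a union bound over the two events keeps the total failure probability within budget. The main obstacle, and the only genuinely non-routine part, is establishing the range bound $M=\Theta(\sqrt d)$ cleanly: it is the interplay of the three scales $\alpha_k$, $b_{k-1}$, and $\tau_k$ (with $\tau_k\propto b_{k-1}$ and $\alpha_k\propto b_{k-1}\sqrt d$) that dictates the $d$-dependence of $m_k$. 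Once $M$ and the $O(d)$ pseudo-dimension are in hand, the conclusion is a black-box application of standard VC/pseudo-dimension theory.
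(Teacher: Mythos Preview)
Your proposal is correct and follows essentially the same approach as the paper. The paper's proof is even terser than yours: it simply cites Lemma~H.3 of \cite{awasthi2014power} for the range bound $\ell(\vec w,\vec x,y)=O(\sqrt d)$ on the band and Lemma~H.2 of \cite{awasthi2014power} for the resulting uniform-convergence conclusion, so your write-up effectively unpacks the content of those two cited lemmas (the $\Theta(\sqrt d)$ range coming from $\alpha_k\propto b_{k-1}\sqrt d$ and $\tau_k\propto b_{k-1}$, and the $O(d)$ pseudo-dimension of the linear-threshold loss class).
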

\begin{proof}
By Lemma H.3 of \cite{awasthi2014power}, $\ell(\vec w, \vec x,y) = O(\sqrt d)$ for all $(\vec x, y)\in S_{\vec w_{k-1}, b_{k-1}}$ and $\theta(\vec w, \vec w_{k-1}) \leq r_k$. We get the result by applying  Lemma H.2 of \cite{awasthi2014power}. 
\end{proof}

\section{Initialization}\label{sec:KLS}
We initialize our margin based procedure with the algorithm from~\cite{KLS09}. The guarantees mentioned in \cite{KLS09} hold as long as the noise rate is $\eta \leq c \frac{\epsilon^2}{\log 1/\epsilon}$. \cite{KLS09} do not explicitly compute the constant but it is easy to check that $c \leq \frac 1 {256}$. This can be computed from inequality $17$ in the proof of Lemma~$16$ in \cite{KLS09}. We need the l.h.s. to be at least $\epsilon^2/2$. On the r.h.s., the first term is lower bounded by $\epsilon^2/512$. Hence, we need the second term to be at most $\frac{255}{512}\epsilon^2$. The second term is upper bounded by $4c^2\epsilon^2$. This implies that $c \leq 1/256$.

\section{Hinge Loss Minimization}\label{a:hinge}
In this section, we show that hinge loss minimization is not consistent in our setup, that is, that it does not lead to arbitrarily small excess error.
We let $B_1^d$ denote the unit ball in $R^d$. In this section, we will only work with $d=2$, thus we set $B_1 = B_1^2$.

Recall that the $\tau$-hinge loss of a vector $\vec w \in \R^d$ on an example $(\vec x,y) \in \R^d\times\{-1, 1\}$ is defined as follows:
\[
 \ell_\tau(\vec w ,\vec x, y) = \max \left\{0,~ 1 - \frac{y(\vec w\cdot \vec x)}{\tau} \right\}
\]
For a distribution $\tilde D$ over $\R^d \times \{-1, 1\}$, we let $\hL_\tau^{\tilde D}$ denote the expected hinge loss over $D$, that is
\[
 \hL_\tau^{\tilde D} (\vec w)= \E_{(\vec x,y)\sim {\tilde D}} \ell_\tau(\vec w ,\vec x, y).
\]
If clear from context, we omit the superscript and write $\hL_\tau (\vec w)$ for $\hL_\tau^{\tilde D} (\vec w)$.

Let $\A_\tau$ be the algorithm that minimizes the empirical $\tau$-hinge loss over a sample.
That is, for $W = \{(\vec x_1, y_1), \ldots, (\vec x_m, y_m)\}$, we have
\[
 \A_\tau(W) \in \argmin_{\vec w \in B_1} \frac{1}{|W|} \sum_{(\vec x,y) \in W} \ell_\tau(\vec w ,\vec x, y).
\]

Hinge loss minimization over halfspaces converges to the optimal hinge loss over all halfspace (it is ``hinge loss consistent'').
That is, for all $\epsilon >0$ there is a sample size $m(\epsilon)$ such that for all distributions ${\tilde D}$, we have
\[
 \E_{W \sim {\tilde D}^m} [\hL_\tau^{\tilde D} (\A_\tau(W))] \leq \min_{\vec w \in B_1} \hL_\tau^{\tilde D} (\vec w) + \epsilon.
\]

In this section, we show that this does not translate into an agnostic learning guarantee for halfspaces with respect to the $0/1$-loss.
Moreover, hinge loss minimization is not even consistent with respect to the $0/1$-loss even when restricted to a rather benign classes of distributions $\cal P$.
Let $\P_\beta$ be the class of distributions ${\tilde D}$ with uniform marginal over the unit ball in $\R^2$, the Bayes classifier being a halfspace $\vec w$, and satisfying the Massart noise condition with parameter $\beta$.
We show that there is a distribution ${\tilde D}\in \cal P_\beta$ and an $\epsilon \geq 0$ and a sample size $m_0$ such that hinge loss minimization will output a classifier of excess error larger than $\epsilon$ on expectation over samples of size larger than $m_0$. More precisely, for all $m \geq m_0$:
\[
 \E_{W \sim \tilde D^m} [\hL_\tau^{\tilde D} (\A_\tau(W))] > \min_{\vec w \in B_1} \error_{\tilde D} (\vec w) + \epsilon.
\]
Formally, our lower bound for hinge loss minimization is stated as follows.

\medskip
\noindent\textbf{Theorem~\ref{thm:hinge}~(Restated).}
\emph{ For every hinge-loss parameter $\tau \geq 0$
 and every Massart noise parameter $0\leq \beta < 1$, there exists a distribution $\tilde D_{\tau,\beta}\in \P_\beta$
 (that is, a distribution over $B_1 \times \{-1, 1\}$  with uniform marginal over $B_1\subseteq \R^2$ satisfying the $\beta$-Massart condition) such that $\tau$-hinge loss minimization is not consistent on $P_{\tau,\beta}$ with respect to the class of halfspaces.
  That is, there exists an $\epsilon \geq 0$ and a sample size $m(\epsilon)$ such that hinge loss minimization will output a classifier of excess error larger than $\epsilon$ (with high probability over samples of size at least $m(\epsilon)$).
}
\medskip

In the section, we use the notation $h_w$ for the classifier associated with a vector $\vec w\in B_1$, that is $h_w(x) = \sign(\vec w \cdot x)$,
since for our geometric construction it is convenient to differentiate between the two.
The rest of this section is devoted to proving the above theorem.

\subsection*{A class of distributions}
\begin{wrapfigure}[10]{r}{0.2\textwidth}
\label{af:distribution_alpha}
\centering
\includegraphics[width =.2\textwidth]{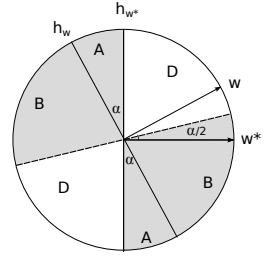}
\caption{${\tilde D}_{\alpha,\eta}$}
\end{wrapfigure}
 Let $\eta = \frac{1-\beta}{2}$.
 We define a family $\P_{\alpha,\eta} \subseteq \P_\beta$ of distributions ${\tilde D}_{\alpha,\eta}$, indexed by an angle $\alpha$ and a noise parameter $\eta$ as follows.
 We let the marginal be uniform over the unit ball $B_1\subseteq \R^2$ and let the Bayes optimal classifier be linear $h^* = h_{\vec w^*}$ for a unit vector $\vec w^*$.
 Let $h_{\vec w}$ be the classifier that is defined by the unit vector ${\vec w}$ at angle $\alpha$ from ${\vec w}^*$.
 We partition the unit ball into areas $A$, $B$ and $D$ as in the Figure 2.
 That is $A$ consists of the two wedges of disagreement between $h_{\vec w}$ and $h_{{\vec w}^*}$ and the wedge where the two classifiers agree is divided in $B$ (points that are closer to $h_{\vec w}$ than to $h_{{\vec w}^*}$) and $D$ (points that are closer to $h_{{\vec w}^*}$ than to $h_{\vec w}$).
 We now ``add noise $\eta$'' at all points in areas $A$ and $B$ and leave the labels deterministic according to $h_{\vec w^*}$ in the area $D$.

More formally,
points at angle between $\alpha/2$ and $\pi/2$ and  points at angle between $\pi + \alpha/2$ and $-\pi/2$ from $\vec w^*$ are labeled with $h_{\vec w^*}(x)$ with (conditional) probability $1$.
All other points are labeled $-h_{\vec w^*}(x)$ with probability $\eta$ and $h_{\vec w^*}(x)$ with probability $(1-\eta)$.

\subsection*{Useful lemmas}
The following lemma relates the $\tau$-hinge loss of unit length vectors to the hinge loss of arbitrary vectors in the unit ball.
 It will allow us to focus our attention to comparing the $\tau$-hinge loss of unit vectors for $\tau > \tau_0$, instead of having to argue about the $\tau_0$ hinge loss of vectors of arbitrary norms in $B_1$.

 \begin{lemma}\label{l:hinge_scaling}
  Let $\tau >0$ and $0 < \lambda \leq 1$.
  Let $\vec w$ and $\vec w^*$ be two vectors of unit length.
  Then $\hL_{\tau}(\lambda \vec w) < \hL_{\tau}(\lambda \vec w^*)$ if and only if $\hL_{\tau/\lambda}(\vec w) < \hL_{\tau/\lambda}(\vec w^*)$.
 \end{lemma}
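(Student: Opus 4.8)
The plan is to observe that the $\tau$-hinge loss depends on the pair $(\vec w, \tau)$ only through the scaled vector $\vec w/\tau$, so that simultaneously scaling the vector by $\lambda$ and replacing $\tau$ by $\tau/\lambda$ leaves the loss unchanged on every single example. Concretely, I would first establish the pointwise identity: for any example $(\vec x, y)$,
\[
\ell_\tau(\lambda \vec w, \vec x, y) = \max\left\{0,\; 1 - \frac{y(\lambda \vec w \cdot \vec x)}{\tau}\right\} = \max\left\{0,\; 1 - \frac{y(\vec w \cdot \vec x)}{\tau/\lambda}\right\} = \ell_{\tau/\lambda}(\vec w, \vec x, y),
\]
where the middle equality is just the algebraic fact that $\lambda/\tau = 1/(\tau/\lambda)$. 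Note this requires only $\lambda > 0$ (not $\lambda \le 1$), so the hypothesis $0 < \lambda \le 1$ is not even needed here; it is presumably carried because in the application $\vec w$ ranges over the unit ball, where $\|\lambda \vec w\| = \lambda \le 1$.

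Next I would take expectations over $(\vec x, y) \sim \tilde D$ on both sides of this pointwise identity. Since the two integrands agree everywhere, their expectations agree, giving $\hL_\tau(\lambda \vec w) = \hL_{\tau/\lambda}(\vec w)$, and applying the same identity to $\vec w^*$ yields $\hL_\tau(\lambda \vec w^*) = \hL_{\tau/\lambda}(\vec w^*)$. The claimed equivalence of strict inequalities is then immediate: $\hL_\tau(\lambda \vec w) < \hL_\tau(\lambda \vec w^*)$ holds if and only if $\hL_{\tau/\lambda}(\vec w) < \hL_{\tau/\lambda}(\vec w^*)$, because we are comparing two pairs of quantities that are termwise equal.

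There is essentially no obstacle in this argument: it is a one-line scaling identity for the hinge loss followed by taking expectations, and in fact it proves something slightly stronger than stated, namely the exact equalities $\hL_\tau(\lambda \vec w) = \hL_{\tau/\lambda}(\vec w)$ rather than merely the equivalence of the inequalities. The only point worth stating carefully is that the identity holds example-by-example before averaging, so that it is insensitive to the underlying distribution $\tilde D$; this is what makes it usable to reduce the study of the $\tau_0$-hinge loss of arbitrary-norm vectors in $B_1$ to the cleaner comparison of unit vectors at an inflated threshold $\tau = \tau_0/\lambda$, as the surrounding discussion intends.
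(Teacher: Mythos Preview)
Your proposal is correct and matches the paper's own proof: both establish the pointwise identity $\ell_\tau(\lambda \vec w,\vec x,y)=\ell_{\tau/\lambda}(\vec w,\vec x,y)$ directly from the definition of the hinge loss, from which the equivalence of the expected-loss inequalities follows immediately.
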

 \begin{proof}
 By the definition of the hinge loss, we have
 \[
   \ell_\tau(\lambda \vec w ,\vec x, y) = \max \left(0,~ 1 - \frac{y(\lambda \vec w\cdot \vec x)}{\tau} \right) = \max \left(0,~ 1 - \frac{y( \vec w\cdot \vec x)}{\tau/ \lambda} \right) = \ell_{\tau/\lambda}(\vec w ,\vec x, y).
 \]
 \end{proof}

\begin{lemma}\label{l:angle_away}
Let $\tau >0$, for any ${\tilde D}\in \P_{\alpha,\eta}$ let $w_\tau$ denote the halfspace that minimizes the $\tau$-hinge loss with respect to ${\tilde D}$. If $\theta(w^*, w_\tau)>0$, then hinge loss minimization is not consistent for the $0/1$-loss.
\end{lemma}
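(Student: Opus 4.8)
The plan is to argue by contradiction, playing the (always-valid) hinge-loss consistency off against the hypothetical $0/1$-consistency. Suppose $\tau$-hinge loss minimization were consistent for the $0/1$-loss on $\tilde D \in \P_{\alpha,\eta}$. Since the Bayes optimal classifier is the halfspace $w^*$, consistency would mean that the excess error of $\A_\tau(W)$ tends to $0$, and hence (by Equation~\ref{eq:angle-excess}) that $\theta(w^*, \A_\tau(W))$ tends to $0$ in probability as the sample size grows. I would then show that being angularly close to $w^*$ is incompatible with being hinge-optimal, which is what hinge-loss consistency simultaneously forces.

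First I would record the regularity of the objective. For a fixed example $(\vec x, y)$ with $\vec x \in B_1$, the map $\vec w \mapsto \ell_\tau(\vec w, \vec x, y)$ is convex, $(1/\tau)$-Lipschitz, and bounded by $1 + 1/\tau$ on $B_1$; hence $\hL_\tau$ is continuous on the compact set $B_1$ and attains its minimum $L^\ast := \hL_\tau(w_\tau)$. Next I would extract a strictly positive hinge-loss gap around $w^*$: fixing $\theta_0 \in (0, \theta(w^*, w_\tau))$ and letting $K := \{\vec w \in B_1 : \theta(\vec w, w^*) \le \theta_0\}$, the set $K$ is compact and, because $w_\tau$ is the (unique) minimizer sitting at angle strictly larger than $\theta_0$, contains no hinge minimizer. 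Continuity of $\hL_\tau$ then yields $\gamma := \min_{\vec w \in K} \hL_\tau(\vec w) - L^\ast > 0$. \textbf{This gap step is the crux of the argument}, and it is exactly the place where I use the hypothesis $\theta(w^*, w_\tau) > 0$ together with uniqueness of the minimizing direction (implicit in speaking of \emph{the} minimizer); a priori the minimizer set is only guaranteed to be convex by convexity of $\hL_\tau$, so without uniqueness one would have to separately rule out a second minimizer lying in the direction of $w^*$.

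Finally I would assemble the contradiction. Set $\epsilon := \beta\theta_0/\pi > 0$. By Equation~\ref{eq:angle-excess}, any $\vec w$ with $\error_{\tilde D}(\vec w) - \error_{\tilde D}(w^*) < \epsilon$ satisfies $\theta(\vec w, w^*) < \theta_0$, hence $\vec w \in K$, hence $\hL_\tau(\vec w) \ge L^\ast + \gamma$. Therefore the event that $\A_\tau(W)$ has excess error below $\epsilon$ is contained in the event $\{\hL_\tau(\A_\tau(W)) \ge L^\ast + \gamma\}$. On the other hand, hinge-loss consistency gives $\E_W[\hL_\tau(\A_\tau(W))] - L^\ast \to 0$, and since the excess hinge loss is nonnegative, Markov's inequality forces $\Pr_W[\hL_\tau(\A_\tau(W)) \ge L^\ast + \gamma] \to 0$. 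Combining the two, $\Pr_W[\error_{\tilde D}(\A_\tau(W)) - \error_{\tilde D}(w^*) < \epsilon] \to 0$, which directly contradicts $0/1$-consistency (consistency would force this probability to tend to $1$ for large samples). Hence hinge-loss minimization cannot be consistent, with witnessing excess-error level $\epsilon = \beta\theta_0/\pi$, proving the lemma.
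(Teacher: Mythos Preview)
Your proof is correct and follows essentially the same approach as the paper: both extract a strictly positive hinge-loss gap on a compact set of vectors near the $w^*$ direction via continuity, use hinge-loss consistency to force the algorithm's output away from that set, and then invoke Equation~\eqref{eq:angle-excess} to lower-bound the excess $0/1$-error. The only notable difference is that the paper first explicitly computes $\hL_\tau(\vec 0)=1>\hL_\tau(w^*)$ to rule out $\vec 0$ as a minimizer (which makes $\theta(w^*,w_\tau)$ well-defined and closes your set $K$ without introducing a minimizer), whereas you take this as implicit in the hypothesis; the uniqueness caveat you flag is equally unaddressed in the paper's own argument.
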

\begin{proof}

 First we show that the hinge loss minimizer is never the vector $\vec 0$.   Note that $\hL_\tau^{\tilde D}(\vec 0) =1$ (for all $\tau>0$).
Consider the case $\tau \geq 1$, we show that $\vec w^*$ has $\tau$-hinge loss strictly smaller than $1$. Integrating the hinge loss over the unit ball using polar coordinates, we get
  \begin{align*}
   \hL_\tau^{\tilde D}(\vec w^*) & < \frac{2}{\pi} \left((1-\eta) \int_0^1 \int_0^\pi (1 - \frac{z}{\tau} \sin(\varphi)) ~z ~d\varphi ~dz +  \eta \int_0^1 \int_0^\pi (1 + \frac{z}{\tau} \sin(\varphi)) ~z ~d\varphi ~dz\right)   \\
   & = \frac{2}{\pi} \left((1-\eta) \int_0^1 \int_0^\pi z - \frac{z^2}{\tau} \sin(\varphi) ~d\varphi ~dz +  \eta \int_0^1 \int_0^\pi z + \frac{z^2}{\tau} \sin(\varphi) ~d\varphi ~dz\right)   \\
   & = 1 + \frac{2}{\pi} \left((1-2\eta) \int_0^1 \int_0^\pi - \frac{z^2}{\tau} \sin(\varphi) ~d\varphi ~dz \right)   \\
   & = 1 - \frac{2}{\pi} \left((1-2\eta) \int_0^1 \int_0^\pi  \frac{z^2}{\tau} \sin(\varphi) ~d\varphi ~dz \right)  < 1.
  \end{align*}
For the case of $\tau <1$,  we have
  \[
   \hL_\tau (\tau \vec w^*) = \hL_1(\vec w^*) <1.
  \]
Thus, $(0,0)$ is not the hinge-minimizer. Then, by the assumption of the lemma $w_\tau$ has some positive angle $\gamma$ to the $\vec w^*$.
Furthermore, for all $0\leq \lambda\leq 1$,  $\hL_\tau^{\tilde D}(\vec w_\tau) < \hL_\tau^{\tilde D}(\lambda \vec w^*)$.
Since $\vec w \mapsto \hL_\tau^{\tilde D}(\vec w)$ is a continuous function we can choose an $\epsilon >0$ such that
  \[
   \hL_\tau^{\tilde D}(\vec w_\tau) + \epsilon/2 < \hL_\tau^{\tilde D}(\lambda \vec w^*) - \epsilon/2.
  \]
  for all $0\leq \lambda\leq 1$ (note that the set $\{\lambda\vec w^* \mid 0\leq \lambda \leq 1 \}$ is compact).
  Now, we can choose an angle $\mu < \gamma$ such that for all vectors $\vec v$ at angle at most $\mu$ from $\vec w^*$, we have
  \[
   \hL_\tau^{\tilde D}(\vec v)  \geq  \min_{0\leq \lambda\leq 1}\hL_\tau^{\tilde D}(\lambda \vec w^*) - \epsilon/2
  \]
  Since hinge loss minimization will eventually (in expectation over large enough samples) output classifiers of hinge loss strictly smaller than $\hL_\tau^{\tilde D}(\vec w_\tau) + \epsilon/2$, it will then not output classifiers of angle smaller than $\mu$ to $\vec w^*$. By Equation~\ref{eq:angle-excess}, for all $w$, $\error_{\tilde D}(\vec w) - \error_{\tilde D}(\vec w^*)> \beta \frac{\theta(w, w^*)}{\pi}$, therefore, the excess error of a the classfier returned by hinge loss minimization is lower bounded by a constant $\beta \frac{\mu}{\pi}$.  Thus, hinge loss minimization is not consistent with respect to the $0/1$-loss.
\end{proof}

\subsection*{Proof of Theorem \ref{thm:hinge}}

We will show that, for every bound on the noise $\eta_0$ and for every every $\tau_0\geq 0$
 there is an $\alpha_0 >0$, such that the unit length vector $\vec w$ has strictly lower $\tau$-hinge loss than the unit length vector $\vec w^*$ for all $\tau \geq \tau_0$.
 By Lemma \ref{l:hinge_scaling}, this implies that for every bound on the noise $\eta_0$ and for every $\tau_0$ there is an $\alpha_0 >0$ such that for all $0 < \lambda \leq 1$ we have $\hL_{\tau_0}(\lambda \vec w) < \hL_{\tau_0}(\lambda \vec w^*)$.
This implies that the hinge minimizer is not a multiple of  $\vec w^*$ and so is at a positive angle to $\vec w^*$.
Now Lemma \ref{l:angle_away} tells us that hinge loss minimization is not consistent for the $0/1$-loss.

 \begin{wrapfigure}[10]{r}{0.2\textwidth}
\vspace{-.8cm}
\centering
\includegraphics[width =.2\textwidth]{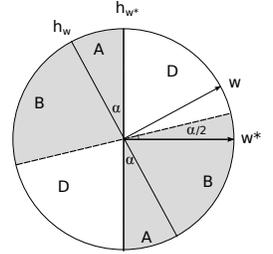}
\caption{${\tilde D}_{\alpha,\eta}$}
\end{wrapfigure}
 In the sequel, we will now focus on the unit length vectors $\vec w$ and $\vec w^*$ and show how to choose $\alpha_0$ as a function  of $\tau_0$ and $\eta_0$.
We let $\cA$ denote the hinge loss of $h_{\vec w^*}$ on one wedge (one half of) area $A$ when the labels are correct and $\dA$ that hinge loss on that same area when the labels are not correct.
 Analogously, we define $\cB, \dB, \cD$ and $\dD$.
 For example, for $\tau \geq 1$, we have (integrating the hinge loss over the unit ball using polar coordinates)

\begin{align*}
&\cA = \frac{1}{\pi}\int_0^1 \int_0^\alpha (1 -  \frac{z}{\tau} \sin(\varphi))  z~d\varphi ~dz,\\
&\dA = \frac{1}{\pi}\int_0^1 \int_0^\alpha (1 +  \frac{z}{\tau} \sin(\varphi))  z~d\varphi ~dz,\\
& \cB = \frac{1}{\pi} \int_0^1 \int_\alpha^{\frac{\pi + \alpha}{2}} (1 -  \frac{z}{\tau} \sin(\varphi))  z~d\varphi ~dz,\\
&  \dB = \frac{1}{\pi} \int_0^1 \int_\alpha^{\frac{\pi + \alpha}{2}} (1 +  \frac{z}{\tau} \sin(\varphi))  z~d\varphi ~dz,\\
&  \cD = \frac{1}{\pi} \int_0^1 \int_0^{\frac{\pi - \alpha}{2}} (1 -  \frac{z}{\tau} \sin(\varphi))  z~d\varphi ~dz,\\
\text{and}\qquad &\dD = \frac{1}{\pi} \int_0^1 \int_0^{\frac{\pi - \alpha}{2}} (1 +  \frac{z}{\tau} \sin(\varphi))  z~d\varphi ~dz.
 \end{align*}
Now we can express the hinge loss of both $h_{\vec w^*}$ and $h_{\vec w}$ in terms of these quantities.
 For $h_{\vec w^*}$ we have
 \[
  \hL_\tau(h_{\vec w^*}) = 2 \cdot \left(\eta (\dA + \dB) + (1-\eta)(\cA + \cB) + \cD \right).
 \]
 For $h_{\vec w}$, note that area $B$ relates to $h_{\vec w}$ as area $D$ relates to $h_{\vec w^*}$ (and vice versa). Thus, the roles of $B$ and $D$ are exchanged for $h_{\vec w}$. That is, for example, for the noisy version of area $B$ the classifier $h_{\vec w}$ pays $\dD$. We have
 \[
  \hL_\tau(h_{\vec w}) = 2 \cdot \left(\eta (\cA + \dD) + (1-\eta)(\dA + \cD) + \cB \right).
 \]
 This yields
\[
 \hL_\tau(h_{\vec w}) -  \hL_\tau(h_{\vec w^*}) = 2 \cdot \left( (1- 2\eta) (\dA - \cA) - \eta ((\dB - \cB) - (\dD - \cD)) \right).
 \]

\begin{wrapfigure}[7]{r}{0.2\textwidth}
\label{af:areaC}
\vspace{-.8cm}
\centering
\includegraphics[width =.2\textwidth]{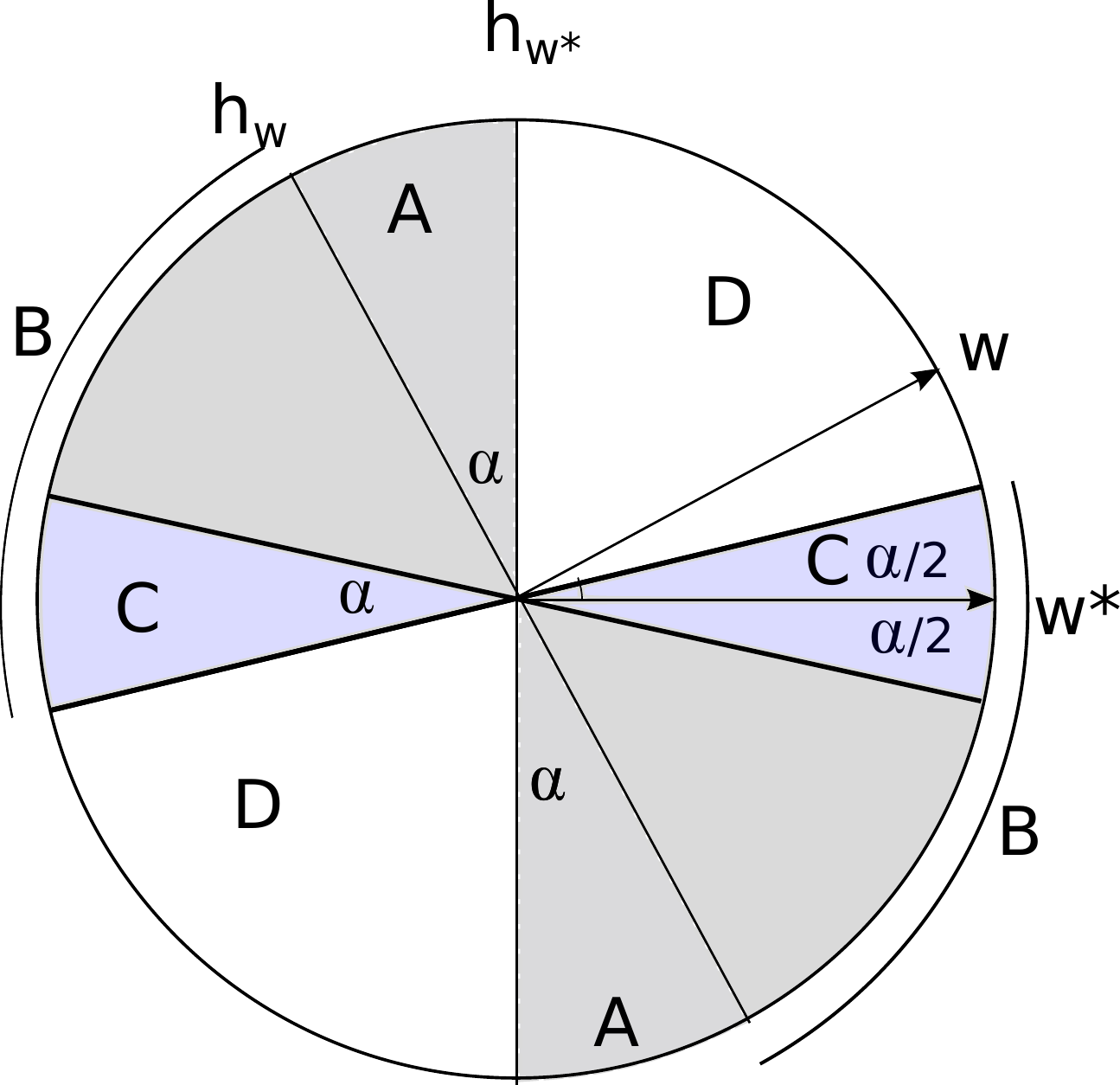}
\caption{Area $C$}
\end{wrapfigure}
We now define area $C$ as the points at angle between $\pi - \alpha/2$ and $\pi +\alpha/2$ from $\vec w^*$ (See Figure 3).
We let $\cC$ and $\dC$ be defined analogously to the above.

Note that $\dA + \dB -\dD = \dC$ and $\cA + \cB -\cD = \cC$.
Thus we get
\begin{align*}
 &  \hL_\tau(h_{\vec w}) -  \hL_\tau(h_{\vec w^*}) \\
  = & 2 \cdot \left( (1- 2\eta) (\dA - \cA) - \eta ((\dB - \cB) - (\dD - \cD)) \right)\\
  = & 2 \cdot \left( (1- \eta) (\dA - \cA) - \eta ((\dB - \cB) + (\dA - \cA) - (\dD - \cD)) \right)\\
  = & 2 \cdot \left( (1- \eta) (\dA - \cA) - \eta ((\dC - \cC)) \right).
\end{align*}

If $\eta > \eta(\alpha, \tau) := \frac{(\dA - \cA)}{(\dA - \cA) + (\dC - \cC)}$, then we get $\hL_\tau(h_{\vec w}) -  \hL_\tau(h_{\vec w^*}) < 0$ and thus $h_w$ having smaller hinge loss than $h_{\vec w^*}$.
Thus, $\eta(\alpha, \tau)$ signifies the amount of noise from which onward, $\vec w$ will have smaller hinge loss than ${\vec w^*}$

Given $\tau_0 \geq 0$,
choose $\alpha$  small enough (we can always choose the angle $\alpha$ sufficiently small for this) so that the area $A$ is included in the $\tau_0$-band around $\vec w^*$. We have for all $\tau \geq \tau_0$:
\begin{align*}
 (\dA - \cA) & = \frac{2}{\pi} \int_0^1 \int_0^\alpha \frac{z^2}{\tau} \sin(\varphi) ~d\varphi ~dz \\
 & = \frac{2}{3\pi} \int_0^\alpha \frac{1}{\tau} \sin(\varphi) ~d\varphi  \\
 & =  \frac{2}{3\pi\tau} \left[ -\cos(\varphi) \right]_0^\alpha  \\
 &=  \frac{2}{3\pi\tau}(1 - \cos(\alpha)).
\end{align*}

For the area $C$ we  now consider the case of $\tau \geq 1 $ and $\tau <1$ separately.
For $\tau \geq 1$ we get
\begin{align*}
 (\dC - \cC) & = \frac{4}{\pi} \int_0^1 \int_{\frac{\pi-\alpha}{2}}^{\frac{\pi}{2}} \frac{z^2}{\tau} \sin(\varphi) ~d\varphi ~dz \\
 & = \frac{4}{3\pi} \int_{\frac{\pi-\alpha}{2}}^{\frac{\pi}{2}}  \frac{1}{\tau} \sin(\varphi) ~d\varphi  \\
 &=  \frac{4}{3\pi\tau} \cos\left(\frac{\pi -\alpha}{2}\right)\\
 &=  \frac{4}{3\pi\tau} \sin\left(\frac{\alpha}{2}\right).
\end{align*}

Thus, for $\tau \geq 1$ we get
\[
 \eta(\alpha, \tau) = \frac{(\dA - \cA)}{(\dA - \cA) + (\dC + \cC)} = \frac{1 - \cos(\alpha)}{1 - \cos(\alpha) + 2 \sin(\alpha/2)}.
\]

We call this quantity $\eta_1(\alpha)$ since, given that $\tau \geq 1$, it does not depend on $\tau$:
\[
 \eta_1(\alpha) = \frac{(\dA - \cA)}{(\dA - \cA) + (\dC + \cC)} = \frac{1 - \cos(\alpha)}{1 - \cos(\alpha) + 2 \sin(\alpha/2)}.
\]
Observe that $\lim_{\alpha \to 0} \eta_1(\alpha) = 0$.
This will yield the first condition on the angle $\alpha$:
Given some bound on the allowed noise $\eta_0$, we can choose an $\alpha$ small enough so that $\eta_1(\alpha) \leq \eta_0/2$.
Then, for the distribution ${\tilde D}_{\alpha, \eta_0}$ we have $\hL_\tau(\vec w) < \hL_\tau(\vec w^*)$ for all
$\tau \geq 1$.

We now consider the case $\tau < 1$.
For this case we lower bound $(\dC - \cC)$ as follows. We have
\begin{align*}
 \dC  & = \frac{2}{\pi} \int_0^1 \int_{\frac{\pi-\alpha}{2}}^{\frac{\pi}{2}} z + \frac{z^2}{\tau} \sin(\varphi) ~d\varphi ~dz \\
& =  \frac{\alpha}{2\pi} + \frac{2}{\pi}  \int_0^1 \int_{\frac{\pi-\alpha}{2}}^{\frac{\pi}{2}} \frac{z^2}{\tau} \sin(\varphi) ~d\varphi ~dz \\
 &=  \frac{\alpha}{2\pi} + \frac{2}{3\tau\pi} \sin\left(\frac{\alpha}{2}\right).
\end{align*}

 \begin{wrapfigure}[10]{r}{0.2\textwidth}
\label{af:areaT}
\vspace{-.8cm}
\centering
\includegraphics[width =.2\textwidth]{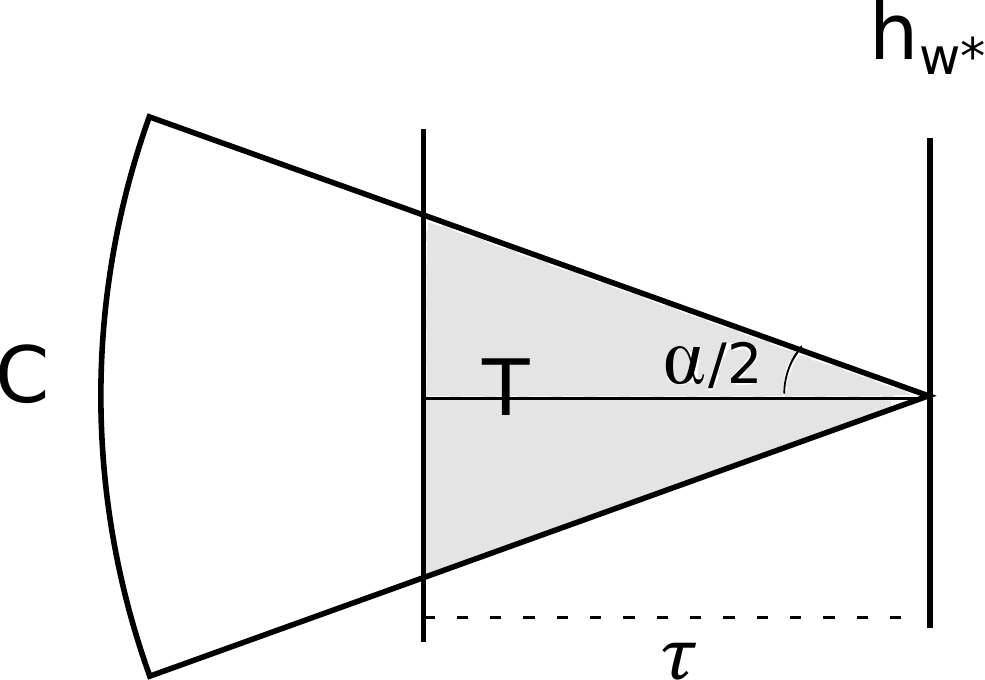}
\caption{Area $T$}
\end{wrapfigure}
We now provide an upper bound on $\cC$ by integrating over a the triangular shape $T$ (see Figure 4).
Note that this bound on $\cC$ is actually exact if $\tau \leq \cos(\alpha/2)$ and only a strict upper bound for $ \cos(\alpha/2) <\tau < 1$. We have
\begin{align*}
 \cC \leq \mathrm(cT) & = \frac{2}{\pi} \cdot \int_0^\tau  (1 -\frac{z}{\tau}) (z\tan(\alpha/2)) ~dz \\
   & = \frac{2}{\pi}\cdot \int_0^\tau  z\tan(\alpha/2) -\frac{z^2}{\tau}\tan(\alpha/2) ~dz \\
   & = \frac{\tau^2}{3\pi} \tan\left(\frac{\alpha}{2}\right).
\end{align*}
Thus we get
\[
 (\dC - \cC) \geq (\dC - \mathrm(cT)) = \frac{1}{\pi} \left(\frac{\alpha}{2} + \frac{2}{3\tau}\sin\left(\frac{\alpha}{2}\right) - \frac{\tau^2}{3}\tan\left(\frac{\alpha}{2}\right)\right).
\]

This yields, for the case $\tau \leq 1$
\[
 \eta(\alpha, \tau) = \frac{\frac{2}{3}(1-\cos(\alpha))}{\frac{2}{3}(1-\cos(\alpha)) + \frac{2}{3}\sin(\alpha)  + \frac{\alpha\tau}{2} - \frac{\tau^3}{3}  \tan(\frac{\alpha}{2}) }
 \]
We call this quantity $\eta_2(\alpha, \tau)$ to differentiate it from $\eta_1(\alpha)$.
Again, it is easy to show that we have $\lim_{\alpha \to 0} \eta_2(\alpha, \tau) =0$ for every $\tau$.
Thus, for a fixed $\tau_0$, we can choose an angle $\alpha$ small enough so that $\hL_{\tau_0}(\vec w) \leq \hL_{\tau_0}(\vec w^*)$.

To argue that we will then also have $\hL_{\tau}(\vec w) \leq \hL_{\tau}(\vec w^*)$ for all $\tau \geq \tau_0$, we show that, for a fixed angle $\alpha$, the function $\eta(\alpha, \tau)$ gets smaller as $\tau$ grows.
For this, it suffices to show that $g(\tau) = \tau \frac{\alpha}{2} - \frac{\tau^3}{3}  \tan(\frac{\alpha}{2})$ is monotonically increasing with $\tau$ for $\tau \leq 1$.
We have
\[
 g'(\tau) = \frac{\alpha}{2} - \frac{\tau^2}{2}\tan\left(\frac{\alpha}{2} \right).
\]
Since we have $\tau^2 \leq 1$ and $\frac{2\alpha}{\tan\left(\frac{\alpha}{2} \right)} \geq 1$ for  $0 \leq\alpha\leq \pi/3$, we get that (for sufficiently small $\alpha$) $g'(\tau) \geq 0$ and thus $g(\tau)$ is monotonically increasing for $0\leq \tau \leq 1$ as desired.

Summarizing, for a given $\tau_0$ and $\eta_0$, we can always choose $\alpha_0$ sufficiently small so that both $\eta_1(\alpha_0) < \frac{\eta_0}{2}$ and $\eta_2(\alpha_0, \tau) < \frac{\eta_0}{2}$ for all $\tau \geq \tau_0$
and thus $\hL_\tau^{{\tilde D}_{\alpha_0,\eta_0}}(\vec w) < \hL_\tau^{{\tilde D}_{\alpha_0,\eta_0}}(\vec w^*)$ for all $\tau \geq \tau_0$.
This completes the proof.

\end{document}